\pgfplotsset{compat=1.13}
\theoremstyle{definition} 
\newtheorem{theorem}{Theorem}[section]
\theoremstyle{definition}
\newtheorem{lemma}[theorem]{Lemma}
\theoremstyle{definition}
\newtheorem{definition}[theorem]{Definition}
\definecolor{darkred}{rgb}{0.5,0,0}
\definecolor{darkgreen}{rgb}{0,0.5,0}
\definecolor{darkblue}{rgb}{0,0,0.5}
\definecolor{lightgrey}{rgb}{0.7,0.7,0.7}
\definecolor{lightergrey}{rgb}{0.93,0.93,0.93}
\DeclareAcronym{cli} {
    short = CLI,
    long = Command Line Interface,
    class = abbrev
}
\newcommand{\eg}{e.g., }
\newtheorem*{theorem*}{Theorem}
\title{Universal Approximation \\with Certified Networks}
\author{Maximilian Baader, Matthew Mirman, Martin Vechev \\
Department of Computer Science\\
ETH Zurich, Switzerland\\
\{\texttt{mbaader,matthew.mirman,martin.vechev}\}\texttt{@inf.ethz.ch}}
\begin{document}

\maketitle

\begin{abstract}
    Training neural networks to be certifiably robust is critical to ensure their safety against adversarial attacks. However, it is currently very difficult to train a neural network that is both accurate and certifiably robust. In this work we take a step towards addressing this challenge. We prove that for every continuous function $f$, there exists a network $n$ such that:
(i) $n$ approximates $f$ arbitrarily close, \emph{and} (ii) simple interval bound propagation
of a region $B$ through $n$ yields a result that is arbitrarily close to the optimal output of $f$ on $B$.
Our result can be seen as a Universal Approximation Theorem for interval-certified ReLU networks.
To the best of our knowledge, this is the first work to prove the existence of accurate, interval-certified networks.

\end{abstract}

\section{Introduction}

Much recent work has shown that neural networks can be fooled into
misclassifying adversarial examples \citep{adversarialDiscovery}, inputs which
are imperceptibly different from those that the neural network classifies correctly.
Initial work on defending against adversarial examples revolved around training
networks to be empirically robust, usually by including adversarial examples
found with various attacks into the training dataset \citep{gu2014towards,
papernot2016limitations, zheng2016improving, athalye2017synthesizing,
evtimov2017robust, moosavi2017universal, xiao2018generating}. However, while
empirical robustness can be practically useful, it does not provide safety
guarantees. As a result, much recent research has focused on verifying that a
network is certifiably robust, typically by employing methods based on mixed
integer linear programming \citep{tjeng2017evaluating}, SMT solvers
\citep{katz2017reluplex}, semidefinite programming \citep{RaghunathanSL18a},
duality \citep{kolter2018provable, krishnamurthy2018dual}, and linear relaxations
\citep{ai2, FastLin2018, WangPWYJ18, CrownIBP, eran, Salman}.

Because the certification rates were far from satisfactory, specific training
methods were recently developed which produce networks that are
certifiably robust: \citet{diffai, RaghunathanSL18b, mixtrain,
kolter2018provable, wong2018scaling, ibp} train the network with standard
optimization applied to an over-approximation of the network behavior on a given
input region (the region is created around the concrete input point). These
techniques aim to discover specific weights which facilitate verification.
There is a tradeoff between the degree of the over-approximation used and the
speed of training and certification.
Recently, \citep{CohenRK19} proposed a statistical approach to certification, which unlike the non-probabilistic methods
discussed above, creates a probabilistic classifier that comes with probabilistic guarantees.

So far, some of the best non-probabilistic results achieved on the popular MNIST
\citep{mnist} and CIFAR10 \citep{cifar} datasets have been obtained with the
simple Interval relaxation \citep{ibp, diffai2}, which scales well at both
training and verification time.
Despite this progress, there are still substantial gaps between known standard
accuracy, experimental robustness, and certified robustness. For example, for
CIFAR10, the best reported certified robustness is 32.04\% with an accuracy of
49.49\% when using a fairly modest $l_\infty$ region with radius 8/255
\citep{ibp}. The state-of-the-art non-robust accuracy for this dataset is $>$
95\% with experimental robustness $>$ 50\%. Given the size of this gap, a
key question then is: \emph{can certified training ever succeed or is there a
fundamental limit}?

\begin{wrapfigure}{r}{0.33\textwidth}
    \centering
    \includegraphics[width=0.31\textwidth]{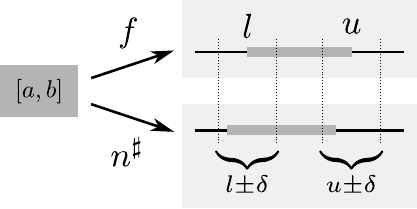}
    \caption{Illustration of \\ \cref{thm:intro:universal_approx_thm}.}
    \label{fig:thm_expl}
    \vspace{-0.5cm}
\end{wrapfigure}

In this paper we take a step in answering this question by proving a result
parallel to the Universal Approximation Theorem \citep{cybenko1989approximation,
hornik1989multilayer}. We prove that for any continuous function $f$ defined on
a compact domain $\Gamma \subseteq \mathbb{R}^m$ and for any desired level of
accuracy $\delta$, there exists a ReLU neural network $n$ which can certifiably
approximate $f$ up to $\delta$ using interval bound propagation. As an interval
is a fairly imprecise relaxation, our result directly applies to more precise
convex relaxations (\eg \citet{CrownIBP, DeepPoly}).

\begin{theorem}[Universal Interval-Certified Approximation, \cref{fig:thm_expl}]
    \label{thm:intro:universal_approx_thm}
    Let $\Gamma \subset \mathbb{R}^m$ be a compact set and let $f \colon \Gamma
    \to \mathbb{R}$ be a continuous function. For all $\delta > 0$, there exists
    a ReLU network $n$ such that for all boxes $[a,b]$ in $\Gamma$ defined by
    points $a, b \in \Gamma$ where $a_k \leq b_k$ for all $k$, the propagation
    of the box $[a, b]$ using interval analysis through the network $n$, denoted
    $n^\sharp([a, b])$, approximates the set $[l, u] = [\min f([a, b]), \max
    f([a, b])] \subseteq \mathbb{R}$ up to $\delta$,
        \begin{equation}
            [l + \delta, u - \delta] \subseteq
            n^\sharp([a, b]) \subseteq
            [l - \delta, u + \delta]
            .
            \label{eq:introduction}
        \end{equation}
\end{theorem}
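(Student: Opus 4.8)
The plan is to split the target inclusion~\eqref{eq:introduction} into its two halves, dispatch the left one almost for free, and concentrate all the work on the right one, which is the assertion that interval propagation through the network we construct does not blow up. One harmless reduction first: since $a,b\in\Gamma$ does not force $[a,b]\subseteq\Gamma$, extend $f$ to a continuous function on a hyperrectangle $\Gamma'\supseteq\Gamma$ via Tietze's extension theorem and work over $\Gamma'$ throughout; fix $\delta>0$ and write $l=\min f([a,b])$, $u=\max f([a,b])$.

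For the left inclusion $[l+\delta,u-\delta]\subseteq n^\sharp([a,b])$ it suffices that the constructed $n$ satisfy $\lVert n-f\rVert_\infty\le\delta$ on $\Gamma'$. Indeed, writing $l=f(p)$ and $u=f(q)$ with $p,q\in[a,b]$ gives $n(p)\le l+\delta$ and $n(q)\ge u-\delta$; the box $[a,b]$ is connected and $n$ is continuous, so $n([a,b])$ is a genuine real interval containing both points, hence contains $[l+\delta,u-\delta]$; and soundness of the interval relaxation gives $n([a,b])\subseteq n^\sharp([a,b])$. So the whole difficulty lies in the right inclusion, which must hold uniformly over all sub-boxes $[a,b]$:
\[
n^\sharp([a,b])\subseteq[l-\delta,\,u+\delta].
\]
This is exactly where the classical Universal Approximation constructions are useless: a single wide hidden layer, or $f$ written as a sum of localized bumps, relaxes under interval propagation to a box whose width is driven by the \emph{total variation} of $f$ --- the ranges of all the bumps a box meets get added up --- rather than by the \emph{oscillation} of $f$ on $[a,b]$. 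Hence $n$ must be assembled from interval-friendly gadgets on which the relaxation is tight by design.

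The construction I would use starts from a grid. By uniform continuity of $f$ on the compact $\Gamma'$, fix a mesh $h$ and let $g$ be the induced grid approximant, so that $\lVert g-f\rVert_\infty$ and the discrepancies $\lvert\min_{[a,b]}g-l\rvert$, $\lvert\max_{[a,b]}g-u\rvert$ are all bounded by the modulus of continuity of $f$ at scale $h$. Then build $n$ in two stages. Stage~1 is a purely monotone ``localization'' block: for each coordinate and each grid hyperplane a clamped ramp $x\mapsto\min(\mathrm{ReLU}((x-t)/h),1)$, which is non-decreasing and through which interval propagation is exact ($\mathrm{ReLU}$ is a sound primitive whose box transformer is exact, and clamping against a constant is exact because it is monotone). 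Stage~2 reconstructs $g$ from these ramps using only operations under which the relaxation stays (nearly) exact --- positive-coefficient sums, clamping against constants, and $\min$/$\max$ of units with essentially disjoint supports (for which $\max$ agrees with a non-negative sum) --- never a subtraction that could introduce cancellation (since $x-x$ already relaxes to a wide interval), and never an unstructured sum of many bump ranges. The reconstruction is organised so that, restricted to any sub-box $[a,b]$, the interval computation differs from the true range of the reconstruction only through a bounded number of ``transition'' contributions supported on cells straddling $\partial[a,b]$, each of size controlled by $h$. Since both the reconstruction error $\lVert n-g\rVert_\infty$ and the relaxation slack then tend to $0$ as $h\to0$, choosing $h$ small enough at the outset makes both $\le\delta$, yielding the two inclusions of~\eqref{eq:introduction}. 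Finally, since the interval domain is the coarsest of the usual relaxations, the very same $n$ also certifies under any relaxation at least as precise (\eg DeepPoly, CROWN-IBP), the remark made in the introduction.

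The main obstacle is Stage~2: keeping the relaxation tight \emph{for large boxes}, those spanning many grid cells. For small boxes $f$ is nearly constant and almost any reasonable reconstruction works; the hard part is a reconstruction of $g$ that is simultaneously (a) pointwise close to $g$, (b) free of cancellation --- no input coordinate ever appearing with opposing signs along a computation path --- and (c) aggregating the per-cell contributions by $\min$/$\max$ rather than by unrestricted summation, so that the relaxation error never grows with the number of cells a box meets. Producing a gadget with all three properties, and verifying the exactness claims about the monotone and disjoint-support units under interval propagation (including the care needed because a stage-1 box over-approximates the reachable monotone curve, so the subsequent processing must be ``consistently monotone'' to recover a tight output interval), is the technical heart; the Tietze reduction, the mesh choice, and the left inclusion are routine.
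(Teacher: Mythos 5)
Your reduction of the left inclusion to uniform approximation plus soundness and connectedness is correct, and your diagnosis of why classical universal approximation fails (accumulation of bump ranges under interval propagation of an unrestricted sum) matches the paper's motivating example. But the proposal stops exactly where the proof has to start: you state that the technical heart is a reconstruction gadget that is pointwise accurate, cancellation-free, and whose interval relaxation error does not grow with the number of grid cells a box meets, and you do not produce it. The paper's resolution is a specific idea your sketch does not contain: a \emph{vertical} $N$-slicing of $f$ into layers $f_k$ of height $\tfrac{\delta}{2}$, with $f=\xi_0+\sum_k f_k$, where each layer is realized as $n_k=R_{[*,1]}\bigl(\sum_{c\in\Delta_k}\phi_c\bigr)$, a clipped sum of local bumps placed on the sublevel structure of $f$. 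The clipping forces $n_k^\sharp(B)\subseteq[0,1]$ unconditionally, so each slice can contribute at most $\tfrac{\delta}{2}$ of imprecision no matter how many cells $B$ spans; and the bump placement guarantees $n_k^\sharp(B)=[1,1]$ when $f(B)\ge\xi_{k+1}+\tfrac{\delta}{2}$ and $[0,0]$ when $f(B)\le\xi_k-\tfrac{\delta}{2}$, so all but a constant number of slices ($\approx 3$ at each end of $[l,u]$) are evaluated \emph{exactly}. This converts the horizontal accumulation you worry about into a bounded vertical error, which is the whole theorem. Your ``bounded number of transition contributions on cells straddling $\partial[a,b]$'' cannot play this role: for a large box on which $f$ oscillates, the problematic cells are interior, not boundary, and their number is unbounded.

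A second, concrete error: you assert that clamped ramps and $\min$/$\max$ of units are (nearly) exact under interval propagation. They are not. The ReLU implementation of $\min$ used in the paper, $\text{nmin}(x,y)=\tfrac12(R(x+y)-R(-x-y)-R(x-y)-R(y-x))$, necessarily contains the cancelling pair $R(x-y)$, $R(y-x)$, and its interval transformer provably loses precision (the upper bound only contracts by the map $H(x)=\tfrac{1+x}{2}$ per level of the $\min$-tree). The paper has to prove dedicated lemmas bounding this loss and, crucially, steepens the ramps by the factor $\ell=2^{\lceil\log_2 2m\rceil+1}$ so that after $\lceil\log_2 2m\rceil+1$ applications of $H$ the abstract output still collapses to $[0,0]$ outside the bump's neighborhood. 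Without this quantitative analysis your Stage~2 claim that the relaxation slack tends to $0$ with $h$ is unsupported, and without the slicing idea the argument does not close even granting it.
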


\begin{figure*}[t]
    \centering
    \hspace{-1.2cm}
    \begin{subfigure}[t]{.4\textwidth}
        \centering
        \begin{tikzpicture}[scale=0.8, font=\tiny\sffamily]
    \pgfmathsetmacro{\l}{1.5}
    \pgfmathsetmacro{\w}{2}

    \node[] (X) at (- 0.5 * \w, - 0.5 * \l) {$x$}; 

    \node[shape=circle,draw=gray] (A) at (0,0) {$0$};
    \node[shape=circle,draw=gray] (B) at (0,-\l) {$0$};

    \node[shape=circle,draw=gray, inner sep=1.8pt] (C) at (\w,0) {$\tfrac{1}{2}$};
    \node[shape=circle,draw=gray, inner sep=1.8pt] (D) at (\w,-\l) {$\tfrac{1}{2}$};

    \node[] (FX) at (\w + 0.5 * \w, - 0.5 * \l) {$y$};

    \path [->] (X) edge node[left,above] {$\tfrac{1}{2}$} (A);    ​
    \path [->] (X) edge node[left,below] {$\tfrac{1}{2}$} (B);

    \path [->] (A) edge node[left,above] {$-\tfrac{3}{2}$} (C);
    \path [->] (B) edge node[left,below] {$-\tfrac{3}{2}$} (D);
    \path [->] (A) edge node[pos=0.4,left,above] {$\tfrac{1}{2}$} (D);
    \path [->] (B) edge node[pos=0.4,left,below] {$\tfrac{1}{2}$} (C);
    
    \path [->] (C) edge node[right,above] {\tiny 1} (FX.north west);
    \path [->] (D) edge node[right,below] {\tiny 1} (FX.south west);

    \coordinate (x0) at (X);
    \node at (x0) [below=5.5mm of x0] { $[0,1]$ };

    \coordinate (x1) at (A);
    \node at (x1) [above=5.5mm  of x1] { $[0, \tfrac{1}{2}]$ };
​
    \coordinate (x2) at (B);
    \node at (x2) [below=5.5mm  of x2] { $[0, \tfrac{1}{2}]$ };
​​

    \coordinate (x3) at (C);
    \node at (x3) [above=5.5mm  of x3] { $[0, \tfrac{3}{4}]$ };
​
    \coordinate (x4) at (D);
    \node at (x4) [below=5.5mm  of x4] { $[0, \tfrac{3}{4}]$ };
​

    \coordinate (x5) at (FX);
    \node at (x5) [below=5.5mm of x5] { $[0,\tfrac{3}{2}]$ }; ​

\end{tikzpicture}
        \subcaption{Not certifiable network $n_1$.}
        \label{fig:network:notprovable}
    \end{subfigure}
    \hspace{-0.2cm}
    \begin{subfigure}[t]{.25\textwidth}
        \centering
        \includegraphics[width=\textwidth]{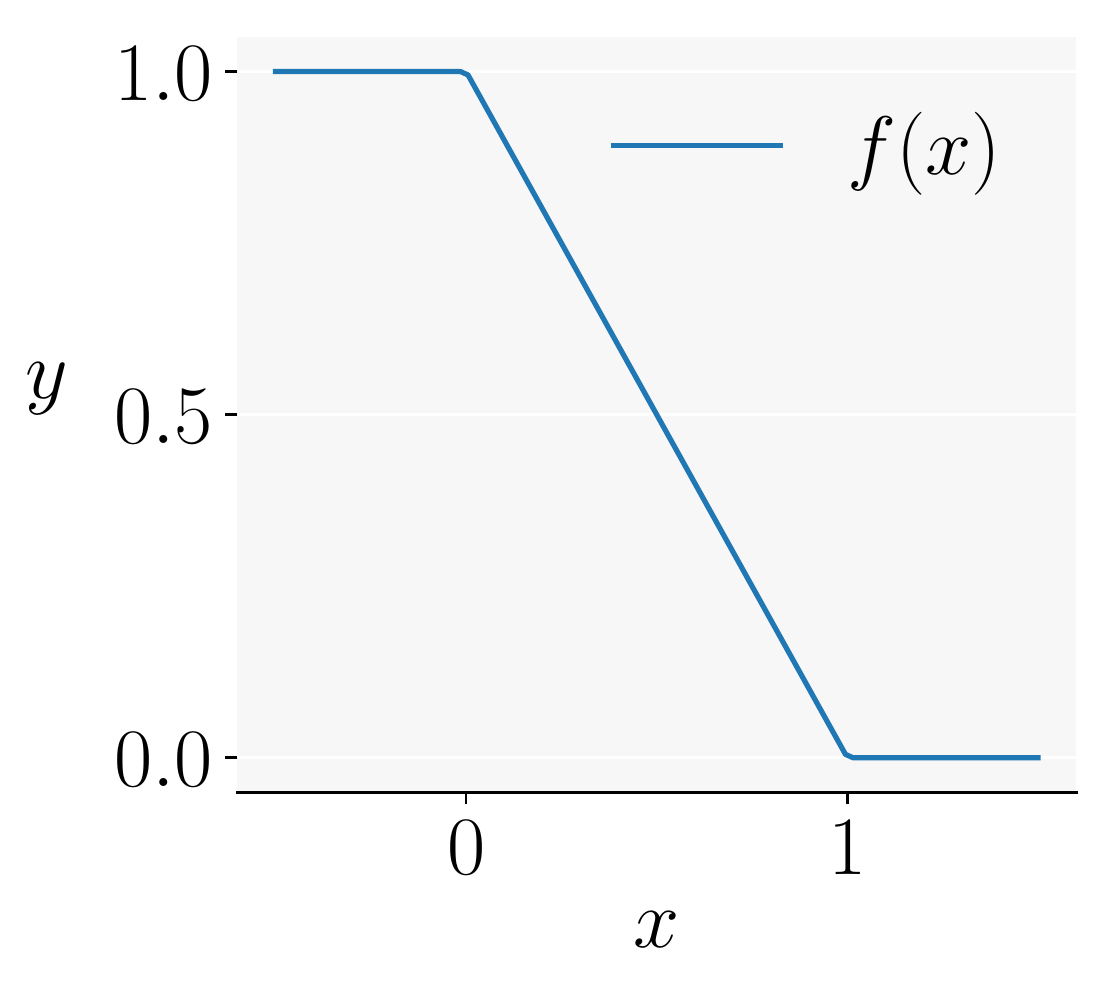}
        \subcaption{The function $f$.}
        \label{fig:network:f}
    \end{subfigure}
    \hspace{-0.2cm}
    \begin{subfigure}[t]{.4\textwidth}
        \centering
        \begin{tikzpicture}[scale=0.8, font=\tiny\sffamily]
    \pgfmathsetmacro{\l}{1.5}
    \pgfmathsetmacro{\w}{2}

    \node[] (X) at (- 0.5 * \w, - 0.5 * \l) {$x$}; 

    \node[shape=circle,draw=gray] (A) at (0,0) {$0$};
    \node[shape=circle,draw=gray] (B) at (0,-\l) {$0$};

    \node[shape=circle,draw=gray, inner sep=1.8pt] (C) at (\w,0) {$\tfrac{1}{2}$};
    \node[shape=circle,draw=gray, inner sep=1.8pt] (D) at (\w,-\l) {$\tfrac{1}{2}$};

    \node[] (FX) at (\w + 0.5 * \w, - 0.5 * \l) {$y$};

    \path [->] (X) edge node[left,above] {$\tfrac{1}{2}$} (A);    ​
    \path [->] (X) edge node[left,below] {$\tfrac{1}{2}$} (B);

    \path [->] (A) edge node[left,above] {$-\tfrac{1}{2}$} (C);
    \path [->] (B) edge node[left,below] {$-\tfrac{1}{2}$} (D);
    \path [->] (A) edge node[pos=0.4,left,above] {$-\tfrac{1}{2}$} (D);
    \path [->] (B) edge node[pos=0.4,left,below] {$-\tfrac{1}{2}$} (C);
    
    \path [->] (C) edge node[right,above] {\tiny 1} (FX.north west);
    \path [->] (D) edge node[right,below] {\tiny 1} (FX.south west);

    \coordinate (x0) at (X);
    \node at (x0) [below=5.5mm of x0] { $[0,1]$ };

    \coordinate (x1) at (A);
    \node at (x1) [above=5.5mm  of x1] { $[0, \tfrac{1}{2}]$ };
​
    \coordinate (x2) at (B);
    \node at (x2) [below=5.5mm  of x2] { $[0, \tfrac{1}{2}]$ };
​​

    \coordinate (x3) at (C);
    \node at (x3) [above=5.5mm  of x3] { $[0, \tfrac{1}{2}]$ };
​
    \coordinate (x4) at (D);
    \node at (x4) [below=5.5mm  of x4] { $[0, \tfrac{1}{2}]$ };
​

    \coordinate (x5) at (FX);
    \node at (x5) [below=5.5mm of x5] { $[0,1]$ }; ​

\end{tikzpicture}
        \subcaption{Certifiable network $n_2$.}
        \label{fig:network:provable}
    \end{subfigure}
    \hspace{-1.2cm}
    \caption{The ReLU networks $n_1$ (\cref{fig:network:notprovable}) and $n_2$
    (\cref{fig:network:provable}) encode the same function $f$
    (\cref{fig:network:f}). Interval analysis fails certify that $n_1$ does
    not exceed $[0,1]$ on $[0,1]$ while certification succeeds for $n_2$. }
    \label{fig:network}
\end{figure*}

We recover the classical universal approximation theorem ($|f(x) - n(x)| \leq
\delta$ for all $x \in \Gamma$) by considering boxes $[a, b]$ describing points
($x = a = b$). Note that here the lower bound is not $[l,u]$ as the network $n$ is an
approximation of $f$. Because interval analysis propagates boxes, the theorem naturally handles
$l_\infty$ norm bound perturbations to the input. Other $l_p$ norms can be
handled by covering the $l_p$ ball with boxes. The theorem can be extended
easily to functions $f \colon \Gamma \to \mathbb{R}^k$ by applying the theorem
component wise. 

\paragraph{Practical meaning of theorem}
The practical meaning of this theorem is as follows: if we train a neural network $n'$ on a given training data set (\eg CIFAR10)
and we are satisfied with the properties of $n'$ (\eg high accuracy), then because $n'$ is a continuous function,
the theorem tells us that there exists a network $n$ which is as accurate as $n'$ and as certifiable with interval
analysis as $n'$ is with a complete verifier. This means that if we fail to find such an $n$, then either $n$ did not possess the 
required capacity or the optimizer was unsuccessful.

\paragraph{Focus on the existence of a network}
We note that we do not provide a method for training a certified ReLU network --
even though our method is \emph{constructive}, we aim to answer an existential question and thus we focus on proving that a
given network exists. Interesting future work items would be to study the
requirements on the size of this network and the inherent hardness of finding
it with standard optimization methods.

\paragraph{Universal approximation is insufficient}
We now discuss why classical universal approximation is insufficient for
establishing our result.
While classical universal approximation theorems state that neural networks can
approximate a large class of functions $f$, unlike our result, they do not state
that robustness of the approximation $n$ of $f$ is actually certified with a
scalable proof method (\eg interval bound propagation).
If one uses a non scalable complete verifier instead, then the standard
Universal approximation theorem is sufficient.

To demonstrate this point, consider the function $f: \mathbb{R} \rightarrow
\mathbb{R}$ (\cref{fig:network:f}) mapping all $x \leq 0$ to $1$, all $x \geq 1$
to $0$ and all $0 < x < 1$ to $1-x$ and two ReLU networks $n_1$
(\cref{fig:network:notprovable}) and $n_2$ (\cref{fig:network:provable})
perfectly approximating $f$, that is $n_1(x) = f(x) = n_2(x)$ for all $x$.
For $\delta = \tfrac{1}{4}$, the interval certification that $n_1$ maps all $x
\in [0,1]$ to $[0,1]$ fails because $[\tfrac{1}{4}, \tfrac{3}{4}] \subseteq
n_1^\sharp([0,1]) = [0, \tfrac{3}{2}] \not\subseteq [-\tfrac{1}{4},
\tfrac{5}{4}]$. However, interval certification succeeds for $n_2$, because
$n_2^\sharp([0,1]) = [0,1]$.
To the best of our knowledge, this is the first work to prove the existence of
accurate, interval-certified networks.

\section{Related work}

After adversarial examples were discovered by \citet{adversarialDiscovery}, many
attacks and defenses were introduced (for a survey, see
\cite{akhtar2018threat}). Initial work on verifying neural network robustness
used exact methods \citep{katz2017reluplex, tjeng2017evaluating} on small
networks, while later research introduced methods based on over-approximation
\citep{ai2, RaghunathanSL18a, eran, Salman} aiming to scale to larger networks. A
fundamentally different approach is randomized smoothing
\citep{li2018certified,LecuyerAG0J19,CohenRK19}, in which probabilistic
classification and certification with high confidence is performed.

As neural networks that are experimentally robust need not be certifiably
robust, there has been significant recent research on training certifiably
robust neural networks \citep{RaghunathanSL18b, diffai, diffai2,
kolter2018provable, wong2018scaling, mixtrain, ibp, dvijotham2018training,
xiao2018training, CohenRK19}. As these methods appear to have reached a
performance wall, several works have started investigating the fundamental
barriers in the datasets and methods that preclude the learning of a robust
network (let alone a certifiably robust one) \citep{khoury2018geometry,
schmidt2018adversarially, tsipras2018robustness}. In our work, we focus on the
question of whether neural networks are capable of approximating functions whose
robustness can be established with the efficient interval relaxation.

\paragraph{Feasibility Results with Neural Networks}
Early versions of the Universal Approximation Theorem were stated by
\citet{cybenko1989approximation} and \citet{hornik1989multilayer}.
\citet{cybenko1989approximation} showed that networks using sigmoidal
activations could approximate continuous functions in the unit hypercube, while
\citet{hornik1989multilayer} showed that even networks with only one hidden
layer are capable of approximating Borel measurable functions.

More recent work has investigated the capabilities of ReLU networks.
Here, \citet{arora2018understanding}, based on \citet{tarela1999LatticePWL}, proved
that every continuous piecewise linear function in $\mathbb{R}^m$ can be
represented by a ReLU network. Later, \citet{ReLU_DNN_FiniteElements}
reduced the number of neurons needed using ideas from finite elements
methods.
Relevant to our work, \citet{arora2018understanding} introduced a ReLU network
representations of the $\min$ function. Further, we use a construction method
that is similar to the construction for nodal basis functions given in
\citet{ReLU_DNN_FiniteElements}.

Universal approximation for Lipschitz constrained networks have been considered
by \citet{anil2018sorting} and later by \citet{cohen2019universal}. 
A bound on the Lipschitz constant of a network immediately yields a certified region
depending on the classification margin. \citet{anil2018sorting} proved that the
set of Lipschitz networks with the GroupSort activation is dense in the space of
Lipschitz continuous functions with Lipschitz constant 1, while
\citet{cohen2019universal} provide an explicit construction to obtain the
network. We note that both of these works focus on Lipschitz continuous functions, a more restricted class
than continuous functions, which we consider in our work.

\section{Background} \label{sec:background}

In this section we provide the concepts necessary to describe our main result.

\paragraph{Adversarial Examples and Robustness Verification}

Let $n : \mathbb{R}^m \rightarrow \mathbb{R}^k$ be a neural network, which
classifies an input $x$ to a label $t$ if $n(x)_t > n(x)_j$ for all $j \neq t$.
For a correctly classified input $x$, an adversarial example is an input $y$
such that $x$ is imperceptible from $y$ to a human, but is classified to a
different label by $n$. 

Frequently, two images are assumed to be ``imperceptible'' if there $l_p$
distance is at most $\epsilon$. The $l_p$ ball around an image is said to be the
adversarial ball, and a network is said to be $\epsilon$-robust around $x$ if
every point in the adversarial ball around $x$ classifies the same. 
In this paper, we limit our discussion to $l_\infty$ adversarial balls which can
be used to cover to all $l_p$ balls. 

The goal of robustness verification is to show that for a neural network $n$,
input point $x$ and label $t$, every possible input in an $l_\infty$ ball of
size $\epsilon$ around $x$ (written $\mathbb{B}^\infty_{\epsilon}(x)$) is also
classified to $t$.

\paragraph{Verifying neural networks with Interval Analysis}
The verification technique we investigate in this work is interval analysis. We
denote by $\mathcal{B}$ the set of boxes $B = [a, b] \subset \mathbb{R}^m$ for all
$m$, where $a_i \leq b_i$ for all $i$. Furthermore for $\Gamma \subseteq
\mathbb{R}^m$ we define $\mathcal{B}(\Gamma) := \mathcal{B} \cap \Gamma$
describing all the boxes in $\Gamma$.
The standard interval-transformations for the basic operations we
are considering, namely $+, -, \cdot$ and the ReLU function $R$  (\citet{ai2}, \citet{ibp}) are

\begin{minipage}{.5\linewidth}
    \begin{align*}
        [a, b] +^\sharp [c, d] &= [a + c, b + d] \\
        R^\sharp ([a, b]) &= [R(a), R(b)]
    \end{align*}
\end{minipage}%
\begin{minipage}{.5\linewidth}
        \begin{align*}
            -^\sharp [a, b] &= [- b, - a] \\
            \lambda \cdot^\sharp [a, b] &= [\lambda a, \lambda b],
        \end{align*}
\end{minipage}
\vspace{0.3cm}

where $[a, b], [c, d] \in \mathcal{B}(\mathbb{R})$, and $\lambda \in
\mathbb{R}_{\geq 0}$. Furthermore, we used $\sharp$ to distinguish the function $f$
from its interval-transformation $f^\sharp$.
To illustrate the difference between $f$ and $f^\sharp$, consider $f(x) := x -
x$ evaluated on $x = [0, 1]$. We have $f([0,1]) = 0$, but $f^\sharp([0,1]) =
[0,1] -^\# [0, 1] = [0, 1] +^\# [-1, 0] = [-1, 1]$ illustrating the loss in
precision that interval analysis suffers from. 

Interval analysis provides a sound over-approximation in the sense that for all
function $f$, the values that $f$ can obtain on $[a,b]$, namely $f([a,b]) :=
\{f(x)\mid x \in [a,b] \}$ are a subset of $f^\sharp([a,b])$. If $f$ is a
composition of functions, $f = f_1 \circ \cdots \circ f_k$, then $f_1^\sharp
\circ \cdots \circ f_k^\sharp$ is a sound interval-transformer for $f$. 

Furthermore all combinations $f$ of $+,-,\cdot$ and $R$ are monotone, that is for
$[a,b], [c,d] \subseteq \mathcal{B}(\mathbb{R}^m)$ such that $[a, b] \subseteq
[c, d]$ then $f^\#([ a, b]) \subseteq f^\#([ c, d])$
(\cref{app:universal_approx_thm}). For boxes $[x, x]$ representing points
$f^\sharp$ coincides with $f$, $f^\sharp([x,x]) = f(x)$. This will later be
needed.

\begin{figure*}[t]
    \centering
    \begin{subfigure}[t]{.32\textwidth}
        \centering
        \includegraphics[width=\textwidth]{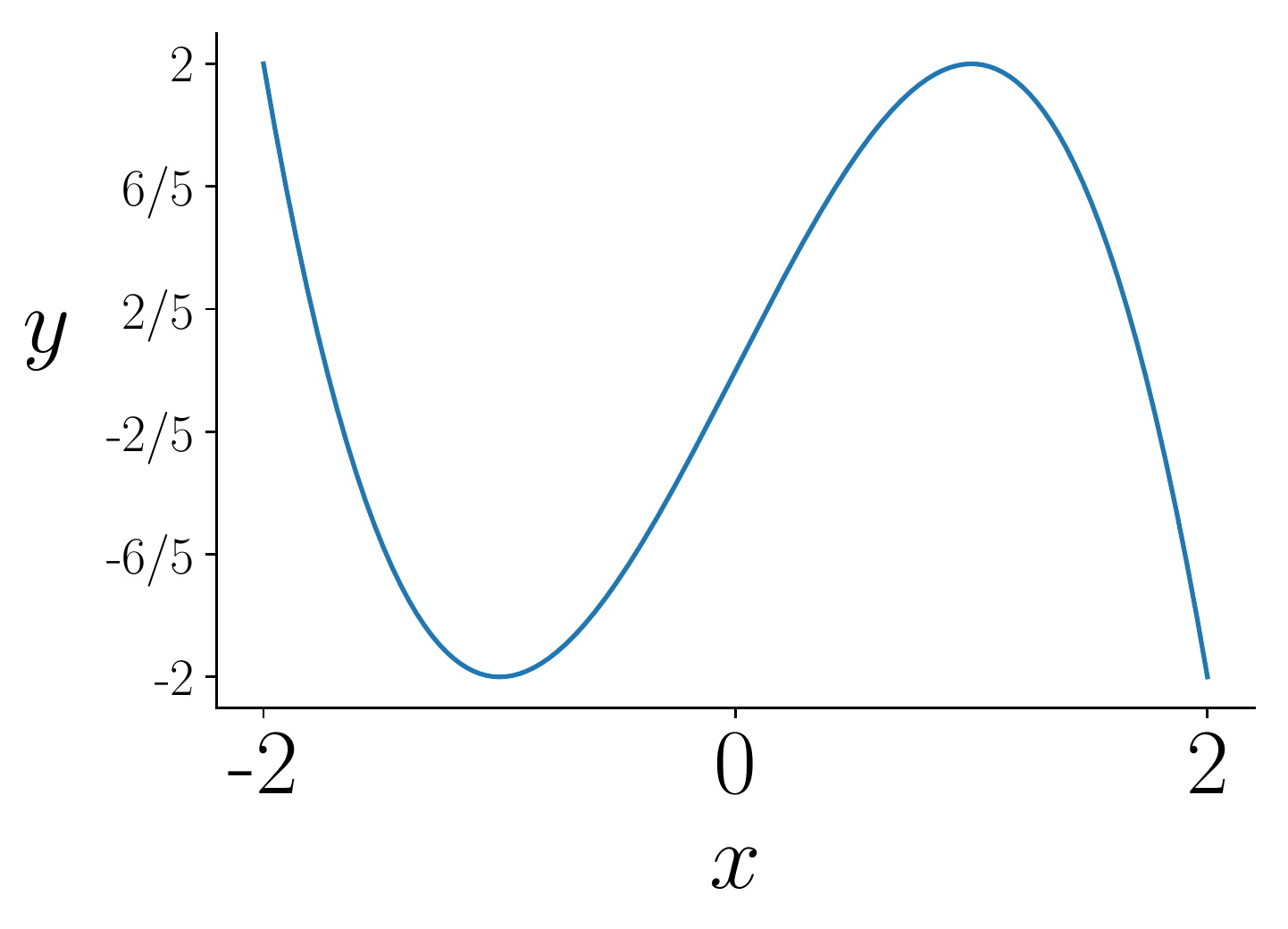}
        \subcaption{$f$}
        \label{fig:function}
    \end{subfigure} 
    \begin{subfigure}[t]{.32\textwidth}
        \centering
        \includegraphics[width=\textwidth]{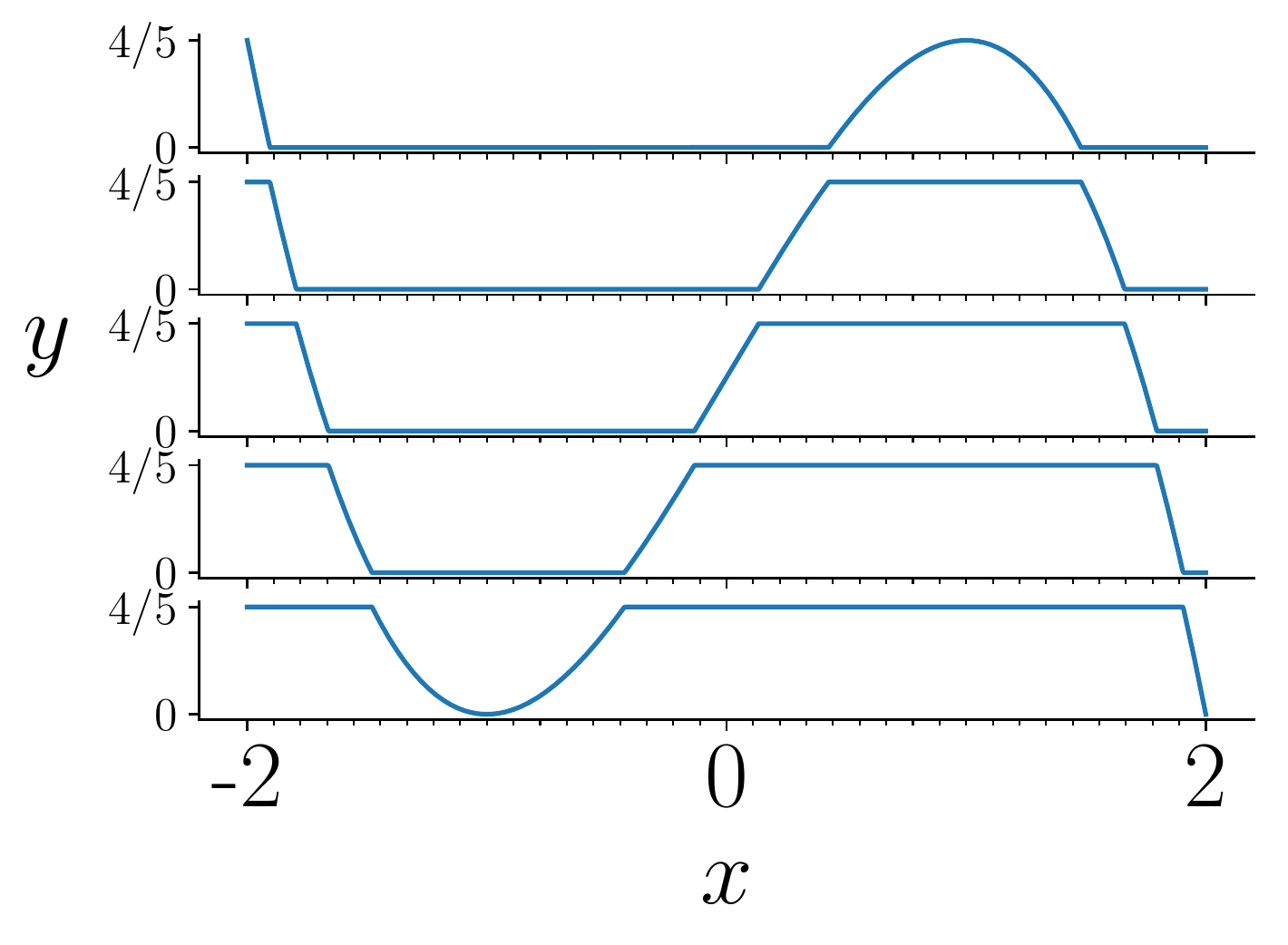}
        \subcaption{Slicing of $f$, $f_0,\ldots,f_4$}
        \label{fig:sliced_function}
    \end{subfigure}
    \begin{subfigure}[t]{.32\textwidth}
        \centering
        \includegraphics[width=\textwidth]{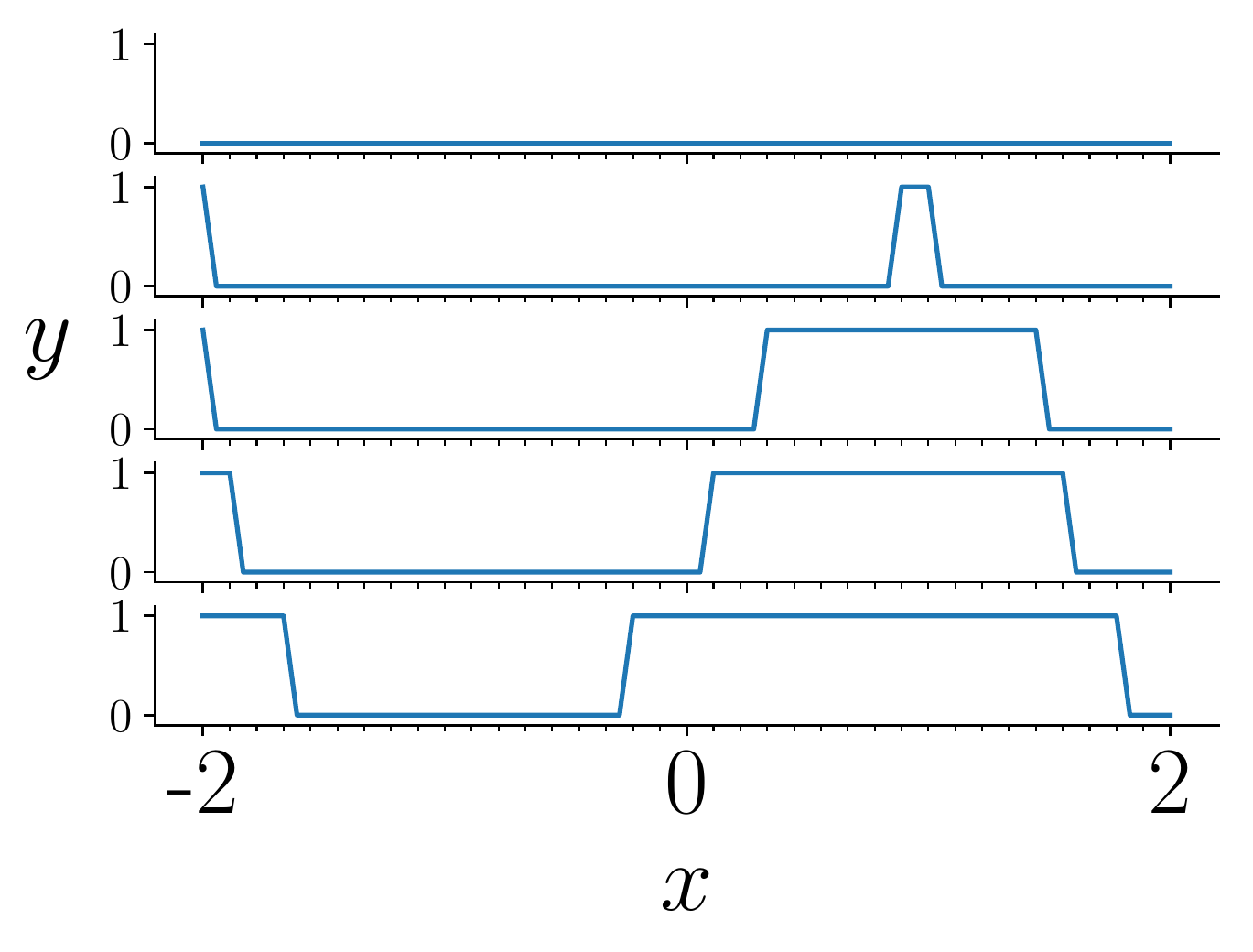}
        \subcaption{Networks $n_k$ approximating $f_k$.}
        \label{fig:local_bumps}
    \end{subfigure} 
    \caption{Approximating $f$ (\cref{fig:function}) using a ReLU network $n =
        \xi_0 + \sum_k n_k$. The ReLU networks $n_k$ (\cref{fig:local_bumps})
        approximate the $N$-slicing of $f$ (\cref{fig:sliced_function}), as a
        sum of local bumps (\cref{fig:local_bump}). }
        \label{fig:slicing}
\end{figure*}

\section{Proving Universal Interval-Provable Approximation} \label{sec:universal_approx_thm}

In this section, we provide an explanation of the proof of our main result,
\cref{thm:universal_approx_thm}, and illustrate the main points of the proof.

The first step in the construction is to deconstruct the function $f$ into
slices $\{f_k \colon \Gamma \to [0, \tfrac{\delta}{2}]\}_{0 \leq k < N}$ such
that that $f(x)=\xi_0+\sum_{k=0}^{N-1}f_k(x)$ for all $x$, where $\xi_{0}$ is
the minimum of $f(\Gamma)$. 
We approximate each slice $f_k$ by a ReLU network $\tfrac{\delta}{2} \cdot n_k$.
The network $n$ approximating $f$ up to $\delta$ will be
${n(x):=\xi_0+\tfrac{\delta}{2}\sum_{k}n_k(x)}$.  
The construction relies on 2 key insights, 
(i) the output of $\tfrac{\delta}{2} \cdot n_k^\sharp$ can be confined to the
interval $[0, \tfrac{\delta}{2}]$, thus the loss of analysis precision is at most
the height of the slice, and 
(ii) we can construct the networks $n_k$ using local bump functions, such that
only 4 slices can contribute to the loss of analysis precision, two for the
lower interval bound, two for the upper one.

The slicing $\{f_k\}_{0 \leq k < 5}$ of the function $f \colon [-2,2] \to
\mathbb{R}$ (\cref{fig:function}), mapping $x$ to $f(x) = -x^3+3x$ is depicted
in \cref{fig:sliced_function}.
The networks $n_k$ are depicted in \cref{fig:local_bumps}. 
In this example, evaluating the interval-transformer of $n$, namely $n^\sharp$
on the box $B = [-1, 1]$ results into $n^\sharp([-1,1]) = [-2, 6/5]$ lies is
within the $\delta=\tfrac{8}{5}$ bound of $f([-1, 1]) = [-2, 2]$.

\begin{definition}[$N$-slicing (\cref{fig:sliced_function})] \label{def:const_N_slicing}
    Let $\Gamma \subset \mathbb{R}^m$ be a closed $m$-dimensional box and let $f
    \colon \Gamma \to \mathbb{R}$ be continuous. The \emph{$N$-slicing} of $f$
    is a set of functions $\{f_k\}_{0 \leq k < N}$ defined by
    \begin{equation*}
        f_k \colon \Gamma \to \mathbb{R},  \quad x \mapsto
        \begin{cases}
            0 & \text{if } f(x) \leq \xi_k,
            \\
            f(x) - \xi_k & \text{if } \xi_k < f(x) < \xi_{k+1},
            \\
            \xi_{k+1} - \xi_k & \text{if } \xi_{k+1} \leq f(x),
        \end{cases}
        \qquad \forall k \in \{0, \dots, N-1\},
    \end{equation*}
    where $\xi_k := \xi_{0} + \frac{k}{N}(\xi_N - \xi_{0})$, $k \in \{1, \dots,
    N-1\}$, $\xi_{0} := \min f(\Gamma)$ and $\xi_N := \max f(\Gamma)$.
\end{definition}

\begin{figure}
    \centering
    \begin{minipage}{.48\textwidth}
        \centering
        \begin{tikzpicture}[scale=0.6, every node/.style={scale=0.8}]
            \pgfmathsetmacro{\l}{1.5}
            \pgfmathsetmacro{\h}{3}
            
            \foreach \i in {1,...,5}
                \foreach \j in {1,...,5} {
                    \coordinate (\j \i) at (\i * \l, \j * \l);
                }
        
            \coordinate (c) at (3 * \l, \h, 3 * \l);

            \foreach \i in {1,...,4}
                \foreach \j in {1,...,5} {
                    \draw[dotted] (\i * \l, \j * \l) -- (\i * \l + \l, \j * \l);
                }
            
            \foreach \i in {1,...,5}
                \foreach \j in {1,...,4} {
                    \draw[dotted] (\i * \l, \j * \l) -- (\i * \l, \j * \l + \l);
                }
            
            \draw[blue] (4.45 * \l, 4.45 * \l) -- (4.55 * \l, 4.55 * \l);
            \draw[blue] (4.45 * \l, 4.55 * \l) -- (4.55 * \l, 4.45 * \l);
            \node[blue] at (4.7 * \l, 4.5 * \l){$x$};
        
            \node at (4 * \l, 4 * \l)[circle,fill=blue,inner sep=1.5pt]{};
            \node at (4 * \l, 5 * \l)[circle,fill=blue,inner sep=1.5pt]{};
            \node at (5 * \l, 4 * \l)[circle,fill=blue,inner sep=1.5pt]{};
            \node at (5 * \l, 5 * \l)[circle,fill=blue,inner sep=1.5pt]{};
        
            \node[red] at (2.2 * \l, 1.7 * \l){$U$};
        
            \draw [fill=red,red] (0.9 * \l , 0.9 * \l) rectangle (1.1 * \l, 1.1 * \l);
            \draw [fill=red,red] (1.9 * \l , 0.9 * \l) rectangle (2.1 * \l, 1.1 * \l);
            \draw [fill=red,red] (2.9 * \l , 0.9 * \l) rectangle (3.1 * \l, 1.1 * \l);
            \draw [fill=red,red] (0.9 * \l , 1.9 * \l) rectangle (1.1 * \l, 2.1 * \l);
            % \draw [fill=red,red] (1.9 * \l , 1.9 * \l) rectangle (2.1 * \l, 2.1 * \l);
            \draw [fill=red,red] (2.9 * \l , 1.9 * \l) rectangle (3.1 * \l, 2.1 * \l);
            \draw [fill=red,red] (2.9 * \l , 2.9 * \l) rectangle (3.1 * \l, 3.1 * \l);
            \draw [fill=red,red] (1.9 * \l , 2.9 * \l) rectangle (2.1 * \l, 3.1 * \l);
            \draw [fill=red,red] (0.9 * \l , 2.9 * \l) rectangle (1.1 * \l, 3.1 * \l);
            
            \draw [red] plot [] coordinates {
                (1.3*\l, 1.3*\l)
                (2.7*\l, 1.3*\l)
                (2.7*\l, 2.7*\l)
                (1.3*\l, 2.7*\l)
                (1.3*\l, 1.3*\l)
                };
        
        \end{tikzpicture} 
        \captionof{figure}{Neighbors $\mathcal{N}(x)$ (blue dots) and
        $\mathcal{N}(U)$ (red squares).}
        \label{fig:neighbours}
    \end{minipage}
    \begin{minipage}{.48\textwidth}
        \vspace{-0.5cm}
        \centering
        \includegraphics[scale=0.4]{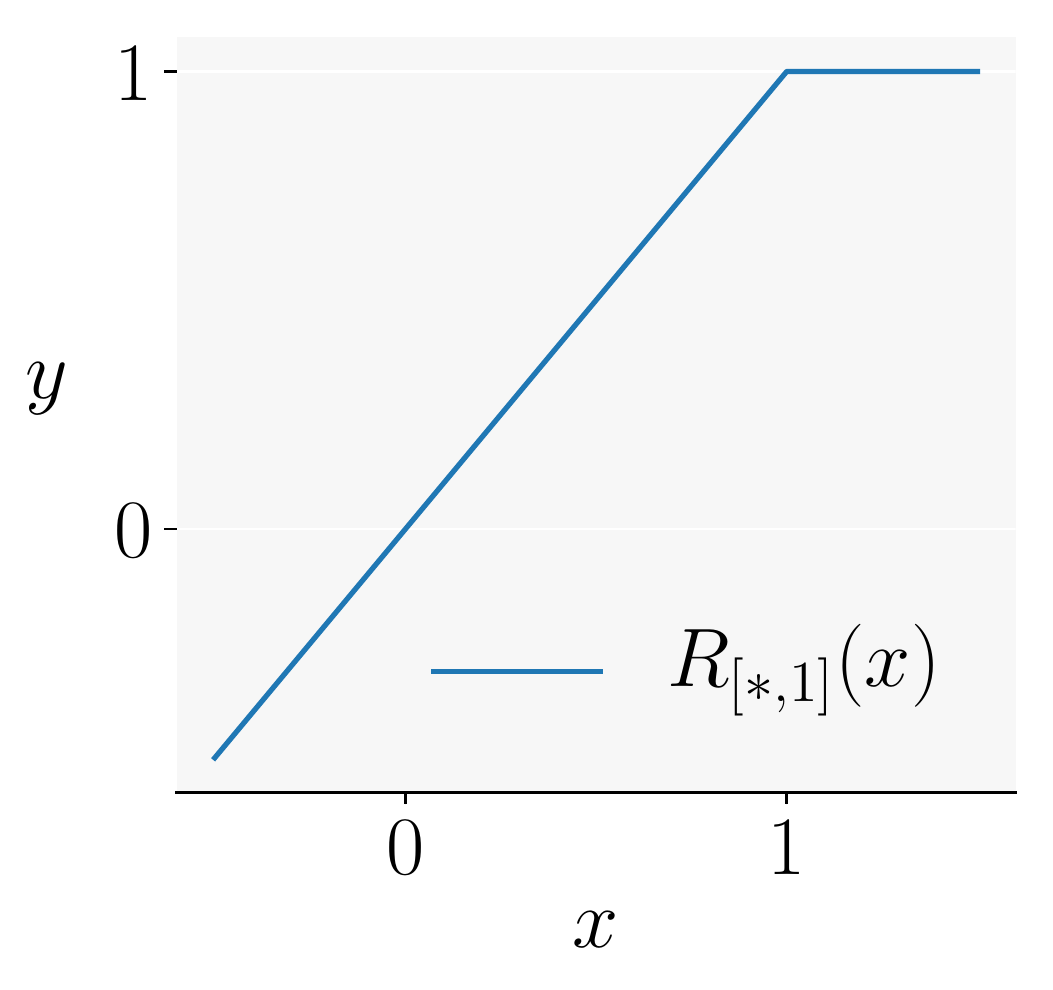}
        \vspace{-0.18cm}
        \captionof{figure}{$R_{[*,b]}(x)$}
        \label{fig:b_clipping}
    \end{minipage}
\end{figure}

To construct a ReLU network satisfying the desired approximation property
(\cref{eq:introduction}) if evaluated on boxes in $\mathcal{B}(\Gamma)$, we need
the ReLU network $\text{nmin}$ capturing the behavior of $\min$ as a building
block (similar to \cite{ReLU_DNN_FiniteElements}). It is given by

\begin{equation*}
    \text{nmin}(x, y) := \frac{1}{2}
    \begin{pmatrix}
        1 & -1 & -1 & -1
    \end{pmatrix}
    R \left(
        \begin{pmatrix}
            1 & 1 \\
            -1 & -1 \\
            1 & -1 \\
            -1 & 1 \\
        \end{pmatrix}
        \begin{pmatrix}
            x \\
            y
        \end{pmatrix}
    \right).
\end{equation*}

With the ReLU network $\text{nmin}$, we can construct recursively a ReLU network
$\text{nmin}_N$ mapping $N$ arguments to the smallest one (\cref{def:app:nmin}).
Even though the interval-transformation loses precision, we can establish bounds
on the precision loss of $\text{nmin}_N^\sharp$ sufficient for our use case
(\cref{app:universal_approx_thm}).

Now, we use the clipping function $R_{[*,1]} := 1 - R(1 - x)$ clipping every
value exceeding $1$ back to $1$ (\cref{fig:b_clipping}) to construct the local
bumps $\phi_c$ w.r.t. a grid $G$. $G$ specifies the set of all possible local
bumps we can use to construct the networks $n_k$. Increasing the finesse of $G$
will increases the approximation precision. 

\begin{definition}[local bump, \cref{fig:local_bump}] \label{def:local_bump}
    Let $M \in \mathbb{N}$, $G := \{ (\tfrac{i_1}{M}), \dots, \tfrac{i_m}{M}
    \mid i \in \mathbb{Z}^m \}$ be a grid, $\ell = 2^{\lceil \log_2 2m
    \rceil+1}$ and let $c = \{\tfrac{i_1^l}{M}, \tfrac{i_1^u}{M}\} \times \cdots
    \times \{\tfrac{i_m^l}{M}, \tfrac{i_m^u}{M}\} \subseteq G$ be a set of grid
    points describing the corner points of a hyperrectangle in $G$. We define a
    ReLU neural network $\phi_c \colon \mathbb{R}^m \to [0, 1] \subset
    \mathbb{R}$ w.r.t. $G$ by
    \begin{equation*}
        \phi_c(x)
        := R \left(\text{nmin}_{2m}
            \bigcup_{1 \leq k \leq m}
            \left\{
                \begin{matrix}
                    R_{[*,1]}(M \cdot \ell \cdot (x_k - \tfrac{i_k^l}{M}) + 1),  \\
                    R_{[*,1]}(M \cdot \ell \cdot (\tfrac{i_k^u}{M} - x_k) + 1)
                \end{matrix}
            \right\}
        \right).
    \end{equation*}
\end{definition}

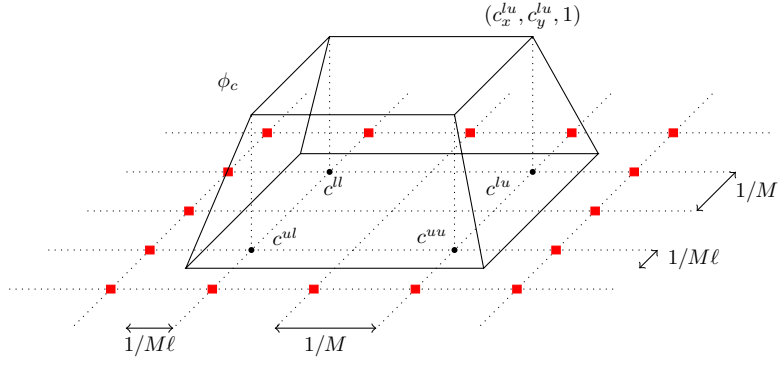
\begin{figure}
    \centering
    \begin{tikzpicture}[scale=0.9, every node/.style={scale=0.8}]
        \pgfmathsetmacro{\l}{1.5}
        \pgfmathsetmacro{\h}{2}
        \pgfmathsetmacro{\o}{0.7}
        
        \foreach \i in {1,...,8}
            \foreach \j in {1,...,8} {
                \coordinate (\j \i) at (\i * \l, 0, \j * \l);
            }

        \coordinate (c33) at (3 * \l, \h, 3 * \l);
        \coordinate (c35) at (5 * \l, \h, 3 * \l);
        \coordinate (c53) at (3 * \l, \h, 5 * \l);
        \coordinate (c55) at (5 * \l, \h, 5 * \l);

        \coordinate (b33) at (3 * \l - \o, 0, 3 * \l - \o);
        \coordinate (b35) at (5 * \l + \o, 0, 3 * \l - \o);
        \coordinate (b53) at (3 * \l - \o, 0, 5 * \l + \o);
        \coordinate (b55) at (5 * \l + \o, 0, 5 * \l + \o);

        \foreach \i in {1,...,6}
            \foreach \j in {2,...,6} {
                \draw[dotted] (\i * \l, 0 , \j * \l) -- (\i * \l + \l, 0 , \j * \l);
                \draw[dotted] (\j * \l, 0 , \i * \l) -- (\j * \l, 0 , \i * \l + \l);
            }

        \draw [fill=red,red] (5.92 * \l , 0, 5.9 * \l) rectangle (6.08 * \l, 0, 6.1 * \l);
        \draw [fill=red,red] (5.92 * \l , 0, 4.9 * \l) rectangle (6.08 * \l, 0, 5.1 * \l);
        \draw [fill=red,red] (5.92 * \l , 0, 3.9 * \l) rectangle (6.08 * \l, 0, 4.1 * \l);
        \draw [fill=red,red] (5.92 * \l , 0, 2.9 * \l) rectangle (6.08 * \l, 0, 3.1 * \l);
        \draw [fill=red,red] (5.92 * \l , 0, 1.9 * \l) rectangle (6.08 * \l, 0, 2.1 * \l);

        \draw [fill=red,red] (1.92 * \l , 0, 5.9 * \l) rectangle (2.08 * \l, 0, 6.1 * \l);
        \draw [fill=red,red] (1.92 * \l , 0, 4.9 * \l) rectangle (2.08 * \l, 0, 5.1 * \l);
        \draw [fill=red,red] (1.92 * \l , 0, 3.9 * \l) rectangle (2.08 * \l, 0, 4.1 * \l);
        \draw [fill=red,red] (1.92 * \l , 0, 2.9 * \l) rectangle (2.08 * \l, 0, 3.1 * \l);
        \draw [fill=red,red] (1.92 * \l , 0, 1.9 * \l) rectangle (2.08 * \l, 0, 2.1 * \l);

        \draw [fill=red,red] (2.92 * \l , 0, 1.9 * \l) rectangle (3.08 * \l, 0, 2.1 * \l);
        \draw [fill=red,red] (3.92 * \l , 0, 1.9 * \l) rectangle (4.08 * \l, 0, 2.1 * \l);
        \draw [fill=red,red] (4.92 * \l , 0, 1.9 * \l) rectangle (5.08 * \l, 0, 2.1 * \l);
        
        \draw [fill=red,red] (2.92 * \l , 0, 5.9 * \l) rectangle (3.08 * \l, 0, 6.1 * \l);
        \draw [fill=red,red] (3.92 * \l , 0, 5.9 * \l) rectangle (4.08 * \l, 0, 6.1 * \l);
        \draw [fill=red,red] (4.92 * \l , 0, 5.9 * \l) rectangle (5.08 * \l, 0, 6.1 * \l);

        % Nodal Basis function::
        \draw[dotted] (33) -- (c33) (35) -- (c35) (53) -- (c53) (55) -- (c55);
        \draw (b33) -- (c33)
              (b35) -- (c35)
              (b53) -- (c53)
              (b55) -- (c55)
              (c33) -- (c35) -- (c55) -- (c53) -- (c33)
              (b33) -- (b35) -- (b55) -- (b53) -- (b33)
              ;

        % Text:
        \draw[<->] (3 * \l - \o, 0, 7 * \l) -- node[below]{$1/M \ell$} (3 * \l, 0, 7 * \l);
        \draw[<->] (4 * \l, 0, 7 * \l) -- node[below]{$1/M $} (5 * \l, 0, 7 * \l);

        \draw[<->] (7 * \l, 0, 5 * \l + \o) -- node[right]{$\;\; 1/M \ell$} (7 * \l, 0, 5 * \l);
        \draw[<->] (7 * \l, 0, 4 * \l) -- node[right]{$\;\; 1/M$} (7 * \l, 0, 3 * \l);

        \draw (3 * \l, 0, 3 * \l + 0.35 * \l) node[right]{$c^{ll}$};
        \node at (3 * \l, 0, 3 * \l)[circle,fill=black,inner sep=1pt]{};
        \draw (5 * \l, 0, 3 * \l + 0.35 * \l) node[left]{$c^{lu}$};
        \node at (5 * \l, 0, 3 * \l)[circle,fill=black,inner sep=1pt]{};
        \draw (3 * \l, 0, 5 * \l - 0.35 * \l) node[right]{$c^{ul}$};
        \node at (3 * \l, 0, 5 * \l)[circle,fill=black,inner sep=1pt]{};
        \draw (5 * \l - 0.2, 0, 5 * \l - 0.35 * \l) node[left]{$c^{uu}$};
        \node at (5 * \l, 0, 5 * \l)[circle,fill=black,inner sep=1pt]{};

        \draw (c35) node[above]{$(c^{lu}_x, c^{lu}_y, 1)$};

        \draw (2 * \l, \h * 2 / 3, 3 * \l) node[]{$\phi_c$};

    \end{tikzpicture} 
    \caption{Local bump $\phi_c$, where $c$ contains the points $c^{ll}, c^{lu},
    c^{ul}, c^{uu}$. The points in $\mathcal{N}(\text{conv}(c))$ are depicted
    by the red squares.}
    \label{fig:local_bump} 
\end{figure}

We will describe later how $M$ and $c$ get picked. A graphical illustration of a
local bump for in two dimensions and $c = \{\tfrac{i_1^l}{M}, \tfrac{i_1^u}{M}\}
\times \{\tfrac{i_2^l}{M}, \tfrac{i_2^u}{M}\} = \{c^{ll}, c^{lu}, c^{ul},
c^{uu}\}$ is shown in \cref{fig:local_bump}.
The local bump $\phi_c(x)$ evaluates to 1 for all $x$ that lie within the convex
hull of $c$, namely $\text{conv}(c)$, after which $\phi_c(x)$ quickly decreases
linearly to 0. $\phi_c$ has $1+2(2d-1)+2d$ ReLUs and $1+\lceil \log_2(2d+1)
\rceil +1$ layers.

By construction $\phi_c(x)$ decreases to 0 before reaching the next neighboring
grid points $\mathcal{N}(\text{conv}(c))$, where $\mathcal{N}(x) := \{g \in G
\mid ||x - g||_\infty \leq \tfrac{1}{M}\} \setminus \{x\}$ denotes the
neighboring grid points of $x$ and similarly for $\mathcal{N}(U) :=
\{\mathcal{N}(x) \mid x \in U\} \setminus U$ (\cref{fig:neighbours}). The set
$\mathcal{N}(\text{conv}(c))$ forms a hyperrectangle in $G$ and is shown in
\cref{fig:local_bump} using red squares. Clearly $\text{conv}(c) \subseteq
\text{conv}(\mathcal{N}(c))$. 

Next, we give bounds on the loss of precision for the interval-transformation
$\phi_c^\sharp$. We can show that interval analysis can (i) never produce
intervals exceeding $[0, 1]$ and (ii) is precise if $B$ does no intersect
$\text{conv}(\mathcal{N}(c)) \setminus \text{conv}(c)$.

\begin{lemma} \label{lem:local_bump_abstract}
    For all $B \in \mathcal{B}(\mathbb{R}^m)$, it holds that $\phi_c^\sharp(B)
    \subseteq [0,1] \in \mathcal{B}$ and 
    \begin{equation*}
        \phi_c^\sharp(B)
        =
        \begin{cases}
            [1, 1] &\text{if } B \subseteq \text{conv}(c)
            \\
            [0, 0] &\text{if } B \subseteq \Gamma \setminus \text{conv}(\mathcal{N}(c)).
        \end{cases}
    \end{equation*}
\end{lemma}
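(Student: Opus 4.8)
The plan is to read $\phi_c$ as a composition of four elementary pieces — the $2m$ affine maps $x\mapsto M\ell(x_k-\tfrac{i_k^l}{M})+1$ and $x\mapsto M\ell(\tfrac{i_k^u}{M}-x_k)+1$, the clipping $R_{[*,1]}$, the network $\text{nmin}_{2m}$, and the final ReLU $R$ — and to chase a box $B=\prod_k[p_k,q_k]$ through the corresponding interval transformers. I will use only two facts. First, $R_{[*,1]}(t)=1-R(1-t)=\min(t,1)$ is monotone nondecreasing, so $R_{[*,1]}^\sharp([p,q])=[\min(p,1),\min(q,1)]\subseteq(-\infty,1]$; hence after the clipping layer every one of the $2m$ scalars fed into $\text{nmin}_{2m}$ is an interval whose right endpoint is at most $1$. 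Second, the bounds on $\text{nmin}_N^\sharp$ established in \cref{app:universal_approx_thm}, used in the form: if $2m$ input intervals all have right endpoint $\le1$ then $\text{nmin}_{2m}^\sharp$ of them has right endpoint $\le1$, and if moreover one of them has right endpoint $\le 1-\ell$ (recall $\ell=2^{\lceil\log_2 2m\rceil+1}$) then $\text{nmin}_{2m}^\sharp$ of them lies in $(-\infty,0]$. I also use the observation from \cref{sec:background} that interval transformers of $\{+,-,\cdot,R\}$-combinations are exact on point boxes.

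\emph{Containment in $[0,1]$ and the value on $\text{conv}(c)$.} For arbitrary $B\in\mathcal B(\mathbb R^m)$, pushing $B$ through the affine maps and the clipping layer yields $2m$ intervals each with right endpoint $\le1$; the first $\text{nmin}_N^\sharp$ bound then gives $\text{nmin}_{2m}^\sharp(\cdot)=[l,u]$ with $u\le1$ (and $l\le u$ by soundness of interval arithmetic), so $R^\sharp([l,u])=[R(l),R(u)]\subseteq[0,1]$; this is $\phi_c^\sharp(B)\subseteq[0,1]\in\mathcal B$. If in addition $B\subseteq\text{conv}(c)=\prod_k[\tfrac{i_k^l}{M},\tfrac{i_k^u}{M}]$, then $\tfrac{i_k^l}{M}\le p_k$ and $q_k\le\tfrac{i_k^u}{M}$ for every $k$, so each of the $2m$ affine maps sends $[p_k,q_k]$ to an interval both of whose endpoints are $\ge1$, and the clipping turns each into $[1,1]$. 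Hence $\text{nmin}_{2m}^\sharp$ receives $2m$ copies of $[1,1]$ and, being exact on point boxes, returns $[1,1]$ (as $\text{nmin}_{2m}(1,\dots,1)=1$); then $R^\sharp([1,1])=[1,1]$, so $\phi_c^\sharp(B)=[1,1]$.

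\emph{The value outside $\text{conv}(\mathcal N(c))$.} First I record the routine geometric fact that $\text{conv}(\mathcal N(c))$ is the box obtained by enlarging $\text{conv}(c)$ by $\tfrac1M$ in each coordinate direction, $\text{conv}(\mathcal N(c))=\prod_k[\tfrac{i_k^l}{M}-\tfrac1M,\tfrac{i_k^u}{M}+\tfrac1M]$, since the corners of this enlarged box are grid neighbors of the corner points in $c$. If $B\subseteq\Gamma\setminus\text{conv}(\mathcal N(c))$, then $B$ is disjoint from this box, and since both are boxes there is a coordinate $k$ with $[p_k,q_k]\cap[\tfrac{i_k^l}{M}-\tfrac1M,\tfrac{i_k^u}{M}+\tfrac1M]=\emptyset$, i.e. $q_k<\tfrac{i_k^l}{M}-\tfrac1M$ or $p_k>\tfrac{i_k^u}{M}+\tfrac1M$. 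In the first case $x_k\mapsto M\ell(x_k-\tfrac{i_k^l}{M})+1$ maps $[p_k,q_k]$ into $(-\infty,M\ell(q_k-\tfrac{i_k^l}{M})+1)\subseteq(-\infty,1-\ell)$, and since $1-\ell<1$ the clipping $\min(\cdot,1)$ leaves this interval unchanged; the second case is symmetric via $x_k\mapsto M\ell(\tfrac{i_k^u}{M}-x_k)+1$. Either way one of the $2m$ intervals entering $\text{nmin}_{2m}$ has right endpoint $\le1-\ell$ while all $2m$ have right endpoint $\le1$, so by the second $\text{nmin}_N^\sharp$ bound $\text{nmin}_{2m}^\sharp(\cdot)\subseteq(-\infty,0]$; applying $R^\sharp$ collapses it to $[0,0]$, hence $\phi_c^\sharp(B)=[0,0]$.

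\emph{Where the work is.} The affine/clipping bookkeeping and the geometry isolating the ``far'' coordinate are routine. The substance — and the reason for the precise value $\ell=2^{\lceil\log_2 2m\rceil+1}$ — lies in the $\text{nmin}_N^\sharp$ estimate imported from \cref{app:universal_approx_thm}: interval analysis does not evaluate $\min$ exactly, and when a very negative interval is merged up the $\lceil\log_2 2m\rceil$-level tree defining $\text{nmin}_{2m}$, each binary $\text{nmin}^\sharp$ step can only guarantee that its right endpoint $u$ improves to $\tfrac12(u+1)$ against a sibling whose right endpoint is $\le1$; starting from $u\le1-\ell$ this reaches $1-\ell/2^{\lceil\log_2 2m\rceil}=-1\le0$ at the root — just barely enough, which is exactly what the factor $2$ in the exponent of $\ell$ provides.
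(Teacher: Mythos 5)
Your proof is correct and follows essentially the same route as the paper's: push the box through the affine and clipping layers, use the $\text{nmin}_N^\sharp$ estimate (\cref{lem:app:min_estimate} together with \cref{lem:app:H_identity} and monotonicity) to control the upper endpoint, and finish with the final $R^\sharp$. Your added detail on the separating coordinate for $B \cap \text{conv}(\mathcal{N}(c)) = \emptyset$ and on why $\phi_c^\sharp(B) \subseteq [0,1]$ only makes explicit what the paper leaves implicit.
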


The formal proof is given in \cref{app:universal_approx_thm}. 
The next lemma shows, how a ReLU network $n_k$ can approximate the slice $f_k$
while simultaneously confining the loss of analysis precision. 

\begin{lemma} \label{lem:Rn_slice}
    Let $\Gamma \subset \mathbb{R}^m$ be a closed box and let $f \colon \Gamma
    \to \mathbb{R}$ be continuous. For all $\delta > 0$ there exists a set of
    ReLU networks $\{n_k\}_{0 \leq k < N}$ of size $N \in \mathbb{N}$
    approximating the $N$-slicing of $f$, $\{f_k\}_{0 \leq k < N}$ ($\xi_k$ as
    in \cref{def:const_N_slicing}) such that for all boxes $B \in
    \mathcal{B}(\Gamma)$
    \begin{equation}
        n_k^\sharp(B) =
        \begin{cases}
            [0, 0] 
            &\text{if } f(B) \leq \xi_k - \tfrac{\delta}{2}
            \\
            [1, 1] 
            &\text{if } f(B) \geq \xi_{k+1} + \tfrac{\delta}{2}. 
        \end{cases}
        \label{eq:lem:Rn_slices}
    \end{equation}
    and $n_k^\sharp(B) \subseteq [0, 1]$. 
\end{lemma}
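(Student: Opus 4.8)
The plan is to realise each $n_k$ as a \emph{clipped sum of local bumps}, $n_k := R_{[*,1]}\bigl(\sum_{c \in S_k}\phi_c\bigr)$, over a suitably chosen finite family $S_k$ of axis-aligned grid boxes, and then to read off both cases of \cref{eq:lem:Rn_slices} and the containment $n_k^\sharp(B)\subseteq[0,1]$ directly from \cref{lem:local_bump_abstract}. First I would dispose of the trivial case $\xi_0=\xi_N$ ($f$ constant), where $n_k\equiv 0$ works. Otherwise, extend $f$ continuously to a box $\Gamma^+$ slightly larger than $\Gamma$ by $\tilde f := f\circ\pi_\Gamma$, where $\pi_\Gamma$ is the coordinatewise projection onto $\Gamma$ (which is $1$-Lipschitz in $\|\cdot\|_\infty$, so $\tilde f$ is uniformly continuous and takes the same values as $f$); then, using uniform continuity of $\tilde f$, pick the grid fineness $M\in\mathbb{N}$ so large that $\|y-y'\|_\infty\le\tfrac{2}{M}$ forces $|\tilde f(y)-\tilde f(y')|\le\tfrac{\delta}{4}$, and so that the $\tfrac{2}{M}$-neighbourhood of $\Gamma$ still lies inside $\Gamma^+$. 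With $G=\tfrac{1}{M}\mathbb{Z}^m$ as in \cref{def:local_bump}, define
\[
    S_k := \bigl\{\, c\subseteq G :\ c \text{ is the corner set of a grid box},\ \text{conv}(\mathcal{N}(c))\subseteq\Gamma^+,\ \tilde f\ge\xi_{k+1}+\tfrac{\delta}{4}\text{ on }\text{conv}(\mathcal{N}(c))\,\bigr\},
\]
which is finite (only finitely many grid boxes fit inside the bounded set $\Gamma^+$), so $n_k$ is a genuine ReLU network, being a finite sum of the ReLU networks $\phi_c$ composed with the ReLU network $R_{[*,1]}$.

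Then I would check the three properties. \emph{Containment:} by \cref{lem:local_bump_abstract} each $\phi_c^\sharp(B)\subseteq[0,1]$, so $\sum_{c\in S_k}\phi_c^\sharp(B)=[p,q]$ with $0\le p\le q$; a one-line computation gives $R_{[*,1]}^\sharp([p,q])=[\min(1,p),\min(1,q)]\subseteq[0,1]$, which is $n_k^\sharp(B)$. \emph{Zero case:} suppose $\max f(B)\le\xi_k-\tfrac{\delta}{2}$. If some $z$ lay in $B\cap\text{conv}(\mathcal{N}(c))$ for a $c\in S_k$, then, since $z\in B\subseteq\Gamma$, we would have both $f(z)\le\xi_k-\tfrac{\delta}{2}$ and $f(z)=\tilde f(z)\ge\xi_{k+1}+\tfrac{\delta}{4}\ge\xi_k+\tfrac{\delta}{4}$, impossible as $\delta>0$; hence $B\subseteq\Gamma^+\setminus\text{conv}(\mathcal{N}(c))$ for every $c\in S_k$, so each $\phi_c^\sharp(B)=[0,0]$ by \cref{lem:local_bump_abstract}, the sum is $[0,0]$, and $R_{[*,1]}^\sharp([0,0])=[0,0]$. \emph{Full case:} suppose $\min f(B)\ge\xi_{k+1}+\tfrac{\delta}{2}$ and let $c^\star$ be the smallest grid box with $B\subseteq\text{conv}(c^\star)$; then $\text{conv}(\mathcal{N}(c^\star))$ lies in the $\tfrac{2}{M}$-neighbourhood of $B$, hence in $\Gamma^+$, and for each $y$ there the nearest point $x\in B$ has $\|x-y\|_\infty\le\tfrac{2}{M}$, so $\tilde f(y)\ge f(x)-\tfrac{\delta}{4}\ge\xi_{k+1}+\tfrac{\delta}{4}$; thus $c^\star\in S_k$, hence $\phi_{c^\star}^\sharp(B)=[1,1]$ by \cref{lem:local_bump_abstract}, the sum is some $[p,q]$ with $p,q\ge1$, and $R_{[*,1]}^\sharp([p,q])=[1,1]$. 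Finally, applying the zero and full cases to singletons $B=\{x\}$ (where $n_k^\sharp$ equals $n_k$) shows $n_k(x)=f_k(x)/(\xi_{k+1}-\xi_k)$ whenever $f(x)\notin(\xi_k-\tfrac{\delta}{2},\xi_{k+1}+\tfrac{\delta}{2})$, while both lie in $[0,1]$ otherwise — this is the sense in which $\{n_k\}$ approximates the $N$-slicing.

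The substance of the argument is already packaged in \cref{lem:local_bump_abstract}; the two points that need care here are (i) that a box $B$ may span many grid cells in the full case, handled by letting $S_k$ contain grid boxes of \emph{every} size and taking $c^\star$ to be the minimal one enclosing $B$, and (ii) that the aggregation must be the clipped sum $R_{[*,1]}(\sum_c\phi_c)$ — not an iterated $\text{nmin}$ or $\text{nmax}$ — because the interval transformer of the latter does not send $[1,1]$ inputs back to $[1,1]$, whereas $R_{[*,1]}^\sharp$ is exact and monotone on non-negative intervals and collapses any $[p,q]$ with $p,q\ge1$ to $[1,1]$. Boundary effects, where the super-level set of $f$ touches $\partial\Gamma$, are absorbed by working with the extension $\tilde f$ on $\Gamma^+$. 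I expect choosing the right tight aggregation (point (ii)) to be the main design obstacle; once that choice is fixed, the rest is the threshold bookkeeping sketched above.
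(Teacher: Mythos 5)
Your proposal is correct and follows essentially the same route as the paper: each $n_k$ is the clipped sum $R_{[*,1]}(\sum_c \phi_c)$ over a finite family of grid hyperrectangles whose neighbourhood hulls lie in the super-level set of $f$, with both cases of the claim read off from \cref{lem:local_bump_abstract}. Your variations (defining $S_k$ by a pointwise condition on $\text{conv}(\mathcal{N}(c))$ rather than as minimal enclosing hyperrectangles of qualifying boxes, and extending $f$ to $\Gamma^+$ to handle the boundary) are cosmetic refinements that, if anything, make the zero case slightly cleaner than the paper's argument.
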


It is important to note that in \cref{eq:lem:Rn_slices} we mean $f$ and not
$f^\sharp$. The proof for \cref{lem:Rn_slice} is given in
\cref{app:universal_approx_thm}. In the following, we discuss a proof sketch. 

Because $\Gamma$ is compact and $f$ is continuous, $f$ is uniformly continuous
by the Heine-Cantor Theorem. So we can pick a $M \in \mathbb{N}$ such that for
all $x, y \in \Gamma$ satisfying $||y - x||_\infty \leq \tfrac{1}{M}$ holds
$|f(y) - f(x)| \leq \tfrac{\delta}{2}$. We then choose the grid $G =
(\frac{\mathbb{Z}}{M})^m \subseteq \mathbb{R}^m$. 

Next, we construct for every slice $k$ a set $\Delta_k$ of hyperrectangles on
the grid $G$: if a box $B \in \mathcal{B}(\Gamma)$ fulfills $f(B) \geq
\xi_{k+1}+\tfrac{\delta}{2}$, then we add a minimal enclosing hyperrectangle $c
\subset G$ such that $B \subseteq \text{conv}(c)$ to $\Delta_k$, where
$\text{conv}(c)$ denotes the convex hull of $c$. This implies, using uniform
continuity of $f$ and that the grid $G$ is fine enough, that $f(\text{conv}(c))
\geq \xi_{k+1}$. Since there is only a finite number of possible hyperrectangles
in $G$, the set $\Delta_k$ is clearly finite. The network fulfilling
\cref{eq:lem:Rn_slices} is
\begin{equation*}
    n_k(x) := R_{[*,1]}\left(
        \sum_{c \in \Delta_k} \phi_c(x)\right),
\end{equation*}
where $\phi_c$ is as in \cref{def:local_bump}. The $n_k$ are depicted in
\cref{fig:local_bumps}.

Now, we see that \cref{eq:lem:Rn_slices} holds by construction: For all boxes $B \in
\mathcal{B}(\Gamma)$ such that $f \geq \xi_{k+1} + \tfrac{\delta}{2}$ on $B$
exists $c' \in \Delta_k$ such that $B \subseteq \text{conv}(c')$ which implies,
using \cref{lem:local_bump_abstract}, that $\phi_{c'}^\sharp(B) = [1,1]$, hence
\begin{alignat*}{3}
    n_k^\sharp(B)
    &= R_{[*,1]}^\sharp(
        \phi_{c'}^\sharp (B)
        + \sum_{c \in \Delta_k \setminus c'} \phi_c^\sharp(B)) 
    \qquad \qquad
    && \forall c \neq c' : \phi_c^\sharp(B) \subseteq [0,1] \text{(\cref{lem:local_bump_abstract})}
    \\
    &= R_{[*,1]}^\sharp( [1,1] + [p_1, p_2]) 
    \qquad
    && [p_1, p_1] \in \mathcal{B}(\mathbb{R}_{\geq 0})
    \\
    &= R_{[*,1]}^\sharp( [1+p_1,1+p_2])
    &&
    \\
    &= [1, 1].
\end{alignat*}

Similarly, if $f(B) \leq \xi_k - \tfrac{\delta}{2}$ holds, then it holds for all
$c \in \Delta_k$ that $B$ does not intersect $\mathcal{N}(\text{conv}(c))$.
Indeed, if a $c \in \Delta_k$ would violate this, then by construction,
$f(\text{conv}(c)) \geq \xi_{k+1}$, contradicting $f(B) \leq \xi_k -
\tfrac{\delta}{2}$. Thus $\phi_c^\sharp(B) = [0, 0]$, and hence $n^\sharp(B) =
[0, 0]$. 

\begin{theorem} \label{thm:universal_approx_thm_prelim}
    Let $\Gamma \subset \mathbb{R}^m$ be a closed box and let $f \colon \Gamma
    \to \mathbb{R}$ be continuous. Then for all $\delta > 0$,
    exists a ReLU network $n$ such that for all $B \in \mathcal{B} (\Gamma)$ 
    \begin{equation*}
        [l+\delta, u-\delta] \subseteq n^\sharp(B) \subseteq [l-\delta, u+\delta],
    \end{equation*}
    where $l := \min f(B)$ and $u := \max f(B)$. 
\end{theorem}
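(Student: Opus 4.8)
The plan is to assemble the network $n$ from the slice networks $n_k$ furnished by \cref{lem:Rn_slice}, and then track how interval analysis propagates a box $B$ through the sum. Concretely, I would set
\begin{equation*}
    n(x) := \xi_0 + \frac{\delta'}{2}\sum_{k=0}^{N-1} n_k(x),
\end{equation*}
where I first shrink the target accuracy to some $\delta' \le \delta$ to absorb the slicing error (taking $N$ large enough that $\xi_N - \xi_0 = N\cdot\frac{\delta'}{2}$, i.e. each slice has height $\frac{\delta'}{2}$), and where the $n_k$ approximate the $N$-slicing of $f$ as in \cref{lem:Rn_slice} with parameter $\delta'$. The pointwise identity $f(x) = \xi_0 + \sum_k f_k(x)$ together with $|n_k(x) - \tfrac{2}{\delta'}f_k(x)|$ being controlled (each $n_k$ is a genuine approximation of the normalized slice, up to the transition regions of width $O(1/(M\ell))$) gives the classical approximation $|n(x) - f(x)| \le \delta$, which handles the point-boxes $[x,x]$ since $n^\sharp([x,x]) = n(x)$.

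The crux is the interval bound on a genuine box $B = [a,b]$ with $l = \min f(B)$, $u = \max f(B)$. By monotonicity of interval transformers and the fact that $\frac{\delta'}{2}\cdot^\sharp$ and $+^\sharp$ add intervals, $n^\sharp(B) = \xi_0 + \frac{\delta'}{2}\sum_k n_k^\sharp(B)$, so it suffices to bound $\sum_k [\alpha_k,\beta_k]$ where $[\alpha_k,\beta_k] := n_k^\sharp(B) \subseteq [0,1]$. The key observation, supplied by \cref{eq:lem:Rn_slices}, is a \emph{trichotomy}: for each $k$, either $f(B) \le \xi_k - \tfrac{\delta'}{2}$ (forcing $[\alpha_k,\beta_k] = [0,0]$), or $f(B) \ge \xi_{k+1} + \tfrac{\delta'}{2}$ (forcing $[\alpha_k,\beta_k] = [1,1]$), or $k$ lies in the narrow "transition band" $\xi_k - \tfrac{\delta'}{2} < u$ and $l < \xi_{k+1} + \tfrac{\delta'}{2}$. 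The width of $f(B)$ is $u - l$, and the slices of height $\tfrac{\delta'}{2}$ that can fail the first alternative run from roughly $k$ with $\xi_k \approx l$ upward; those failing the second run up to $k$ with $\xi_{k+1} \approx u$; the indices that are genuinely ambiguous (contributing an uncontrolled value in $[0,1]$ rather than $\{0,1\}$) number at most a constant — the paper's insight (ii) says at most $4$, two near the lower endpoint $l$ and two near the upper endpoint $u$. So
\begin{equation*}
    \sum_k \alpha_k \ge \#\{k : f(B) \ge \xi_{k+1} + \tfrac{\delta'}{2}\} \ge \frac{u - l}{\delta'/2} - c_1, \qquad \sum_k \beta_k \le \#\{k : f(B) > \xi_k - \tfrac{\delta'}{2}\} \le \frac{u-l}{\delta'/2} + c_2,
\end{equation*}
for small constants $c_1,c_2$ (coming from the $\lceil\cdot\rceil$ in forming the enclosing hyperrectangles plus the $4$ ambiguous slices). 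Multiplying by $\tfrac{\delta'}{2}$ and adding $\xi_0$, and using $\xi_0 + \tfrac{\delta'}{2}\cdot\frac{u-l}{\delta'/2} = \xi_0 + (u - l)$ — which, after one checks that the lowest slice active on $B$ sits at height $\approx l - \xi_0$, equals the right endpoint $u$ for the $\beta$-sum and $l$ for the $\alpha$-sum up to $O(\delta')$ — yields $n^\sharp(B) \subseteq [l - \delta, u + \delta]$ and $[l+\delta, u - \delta] \subseteq n^\sharp(B)$ once $\delta'$ is chosen small relative to $\delta$ and the constants.

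I expect the main obstacle to be the bookkeeping in the previous paragraph: precisely pinning down \emph{which} slice indices are forced to $0$, which to $1$, and bounding the ambiguous ones by a constant independent of $B$. This requires carefully relating the thresholds $\xi_k$ to $l$ and $u$, accounting for the $\pm\tfrac{\delta'}{2}$ margins in \cref{lem:Rn_slice}, and verifying that the "only 4 slices contribute to the precision loss" claim holds uniformly over all boxes $B \in \mathcal{B}(\Gamma)$ — in particular that the transition regions of the local bumps $\phi_c$ do not let more than two slices be simultaneously ambiguous at either endpoint. Once that combinatorial count is nailed down, the rest is the routine algebra of summing intervals and choosing $\delta'$ (and hence $N$, $M$) small enough; the reduction to point-boxes for the classical-approximation half is immediate from $f^\sharp([x,x]) = f(x)$.
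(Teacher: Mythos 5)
Your overall architecture matches the paper's: the same network $n(x)=\xi_0+\tfrac{\delta}{2}\sum_k n_k(x)$ built from the slice networks of \cref{lem:Rn_slice}, and the same strategy of classifying slice indices as forced to $[0,0]$, forced to $[1,1]$, or neither. However, there is a genuine gap in the half of the claim that reads $[l+\delta,u-\delta]\subseteq n^\sharp(B)$. Your displayed inequalities bound $\sum_k\alpha_k$ from \emph{below} and $\sum_k\beta_k$ from \emph{above}, which only yields the outer containment $n^\sharp(B)\subseteq[l-\delta,u+\delta]$. For the inner containment you need the opposite bounds: $\sum_k\alpha_k$ small (lower endpoint of $n^\sharp(B)$ at most $l+\delta$) and $\sum_k\beta_k$ large (upper endpoint at least $u-\delta$). \cref{lem:Rn_slice} alone cannot deliver this: for the roughly $(u-l)/(\delta/2)$ indices $k$ with $\xi_k$ between $l$ and $u$, the lemma forces neither $[0,0]$ nor $[1,1]$ and only guarantees $n_k^\sharp(B)\subseteq[0,1]$, so a priori each such slice could return $[0,0]$ and the upper endpoint of $n^\sharp(B)$ could collapse to about $l$. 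Your claim that the "genuinely ambiguous" indices number at most a constant is therefore miscalibrated: the set of indices not resolved by the trichotomy grows linearly with $u-l$; what is bounded by a constant is the set of indices whose interval is not pinned down \emph{exactly}.

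The missing ingredient is monotonicity of interval propagation combined with point evaluations inside $B$: for $x\in B$ one has $n_k^\sharp([x,x])=[n_k(x),n_k(x)]\subseteq n_k^\sharp(B)$. Choosing $x\in B$ with $f(x)$ near $l$ forces $0\in n_k^\sharp(B)$ (hence $\alpha_k=0$) for all middle indices, and choosing $x'\in B$ with $f(x')$ near $u$ forces $1\in n_k^\sharp(B)$ (hence $\beta_k=1$); together with the lemma this pins $n_k^\sharp(B)=[0,1]$ exactly for all but the six indices adjacent to $l$ and $u$, which is what makes the inner containment go through. You invoke point boxes only for the degenerate case $B=[x,x]$, not as a tool for general $B$, so this step is absent. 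Separately, your count $\#\{k: f(B)\geq\xi_{k+1}+\delta'/2\}\geq\tfrac{u-l}{\delta'/2}-c_1$ is wrong as written (the count is about $\tfrac{l-\xi_0}{\delta'/2}$, not $\tfrac{u-l}{\delta'/2}$), though your subsequent sentence suggests you intended the correct offset.
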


\textit{Proof. }
    Pick $N$ such that the height of each slice is exactly $\tfrac{\delta}{2}$,
    if this is impossible choose a slightly smaller $\delta$. 
    Let $\{n_k\}_{0 \leq k < N}$ be a series of networks as in
    \cref{lem:Rn_slice}. Recall that $\xi_{0} = \min f(\Gamma)$. We
    define the ReLU network
    \begin{equation}
        n(x) := \xi_{0} + \tfrac{\delta}{2} \sum_{k=0}^{N-1} n_k(x).
        \label{eq:final_network}
    \end{equation}
    Let $B \in \mathcal{B}(\Gamma)$. Thus we have for all $k$
    \begin{alignat}{2}
        f(B) \geq \xi_{k+2}
        &\Leftrightarrow
        f(B) \geq \xi_{k+1} + \tfrac{\delta}{2}
        \quad
        &&\overset{\cref{lem:Rn_slice}}{\Rightarrow}
        \quad
        n_k^\sharp(B) = [1, 1]
        \label{eq:thm:proof:1}
        \\
        f(B) \leq \xi_{k-1}
        &\Leftrightarrow
        f(B) \leq \xi_k - \tfrac{\delta}{2}
        \quad
        &&\overset{\cref{lem:Rn_slice}}{\Rightarrow}
        \quad
        n_k^\sharp(B) = [0, 0].
        \label{eq:thm:proof:2} 
    \end{alignat}
    Let $p, q \in \{0,\dots,N-1\}$ such that 
    \begin{align}
        \xi_p \leq l = \min f(B) \leq \xi_{p+1} 
        \label{eq:thm:cond1} 
        \\
        \xi_q \leq u = \max f(B) \leq \xi_{q+1},
        \label{eq:thm:cond2}
    \end{align}
    \begin{wrapfigure}{r}{0.33\textwidth} 
        \centering
        \vspace{-0.4cm}
        \includegraphics[width=0.235\textwidth]{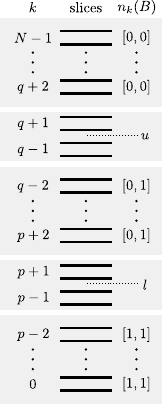}
        \caption{Illustration of the proof for \cref{thm:universal_approx_thm_prelim}.}
        \label{fig:proof_expl}
        \vspace{-3cm}
    \end{wrapfigure}
    as depicted in \cref{fig:proof_expl}. Thus by \cref{eq:thm:proof:1} for all
    ${k \in \{0, \dots, p-2\}}$ it holds that $n_k^\sharp(B) = [1, 1]$ and
    similarly, by \cref{eq:thm:proof:2} for all $k \in \{q+2, \dots, N-1 \}$ it
    holds that $n_k^\sharp(B) = [0, 0]$. Plugging this into
    \cref{eq:final_network} after splitting the sum into three parts leaves us
    with
    \begin{align*}
        n^\sharp(B) 
        &= \xi_{0} + \tfrac{\delta}{2} \sum_{k=0}^{p-2} n_k^\sharp(B) 
                    + \tfrac{\delta}{2} \sum_{k=p-1}^{q+1} n_k^\sharp(B) 
                    + \tfrac{\delta}{2} \sum_{k=p+1}^{N-1} n_k^\sharp(B)
        \\
        &= \xi_{0} 
            + (p-1) [\tfrac{\delta}{2}, \tfrac{\delta}{2}] 
            + \tfrac{\delta}{2} \sum_{k=p-1}^{q+1} n_k^\sharp(B) + [0, 0].
    \end{align*}
    Applying the standard rules for interval analysis, leads to 
    \begin{equation*}
        n^\sharp(B) = [\xi_{p-1}, \xi_{p-1}] + \tfrac{\delta}{2} \sum_{k=p-1}^{q+1} n_k^\sharp(B),
    \end{equation*}
    where we used in the last step, that $\xi_{0} + k \tfrac{\delta}{2} =
    \xi_k$. For all terms in the sum except the terms
    corresponding to the 3 highest and lowest $k$ we get 
    \begin{equation}
        n_k^\sharp(B) = [0, 1] \qquad \forall k \in \{p+2, \dots, q-2\}.
    \end{equation}
    Indeed, from \cref{eq:thm:cond1} we know that there is $x \in B$ such that
    $f(x) \leq \xi_{p+1} = \xi_{p+2} - \tfrac{\delta}{2}$, thus by
    \cref{lem:Rn_slice} $n_k^\sharp([x, x]) = [0, 0]$ for all $p+2 \leq k \leq
    q-2$. Similarly, from \cref{eq:thm:cond2} we know, that there is $x' \in B$
    such that $f(x) \geq \xi_q = \xi_{q-1} + \tfrac{\delta}{2}$, thus by
    \cref{lem:Rn_slice} $n_k^\sharp([x',x']) = [1,1]$ for all $p+2 \leq k \leq
    q-2$. So $n_k^\sharp(B)$ is at least $[0,1]$, and by \cref{lem:Rn_slice}
    also at most $[0,1]$. This leads to
    \begin{alignat*}{3}
        n^\sharp(B)
        &= [\xi_{p-1}, \xi_{p-1}] +
        \tfrac{\delta}{2} \sum_{k=p-1}^{p+1} n_k^\sharp(B) +
        \tfrac{\delta}{2} ((q-2)-(p+2) + 1) [0,1]
        && + \tfrac{\delta}{2} \sum_{k=q-1}^{q+1} n_k^\sharp(B)
        \\
        &= [\xi_{p-1}, \xi_{p-1}] +
            \tfrac{\delta}{2} \sum_{k=p-1}^{p+1} n_k^\sharp(B) +
            [0, \xi_{q-1} - \xi_{p+2}] 
        && + \tfrac{\delta}{2} \sum_{k=q-1}^{q+1} n_k^\sharp(B).
    \end{alignat*}
    We know further, that if $p + 3 \leq q$, than there is an $x \in B$ such
    that $f(x) \geq \xi_{p+3} = \xi_{p+2} + \tfrac{\delta}{2}$, hence similar as
    before $n_{p+1}^\sharp([x, x]) = [1,1]$ and similarly
    $n_{p}^\sharp([x,x])=[1,1]$ and $n^\sharp([x,x])=[1,1]$. So we know, that
    $\tfrac{\delta}{2} \sum_{k=p-1}^{p+1} n_k^\sharp(B)$ includes at least
    $[3\tfrac{\delta}{2},3\tfrac{\delta}{2}]$ and at the most $[0,
    3\tfrac{\delta}{2}]$. Similarly, there exists an $x' \in B$ such that
    $n_{q-1}^\sharp([x', x']) = [0,0]$, $n_{q}^\sharp([x', x']) = [0,0]$ and
    $n_{q+1}^\sharp([x', x']) = [0,0]$. This leaves us with
    \begin{align*}
        [3\tfrac{\delta}{2},3\tfrac{\delta}{2}] 
        \subseteq
        \tfrac{\delta}{2} \sum_{k=p-1}^{p+1} n_k^\sharp(B) 
        \subseteq 
        [0, 3\tfrac{\delta}{2}]
        \\
        [0,0] 
        \subseteq 
        \tfrac{\delta}{2} \sum_{k=q-1}^{q+1} n_k^\sharp(B) 
        \subseteq [0, 3\tfrac{\delta}{2}],
    \end{align*}
    If $p + 3 > q$ the lower bound we want to prove becomes vacuous and
    only the upper one needs to be proven. Thus we have
    \begin{equation*}
        [l + \delta, u - \delta] \subseteq [\xi_{p+2}, \xi_{p-1}] \subseteq 
        n^\sharp(B) \subseteq [\xi_{p-1, \xi_{q+2}}] \subseteq [l - \delta, u + \delta],
    \end{equation*}
    where $l := \min f(B)$ and $u := \max f(B)$.
\qed

\begin{theorem}[Universal Interval-Provable Approximation] \label{thm:universal_approx_thm}
    Let $\Gamma \subset \mathbb{R}^m$ be compact and $f \colon \Gamma \to
    \mathbb{R}^d$ be continuous. For all $\delta \in \mathbb{R}^m_{\geq 0}$
    exists a ReLU network $n$ such that for all $B \in \mathcal{B} (\Gamma)$ 
    \begin{equation*}
        [l+\delta, u-\delta] \subseteq n^\sharp(B) \subseteq [l-\delta, u+\delta],
    \end{equation*}
    where $l, u \in \mathbb{R}^m$ such that $l_k := \min f(B)_k$ and $u_k :=
    \max f(B)_k$ for all $k$. 
\end{theorem}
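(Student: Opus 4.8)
The plan is to reduce \cref{thm:universal_approx_thm} to the box-domain, scalar-valued case already proved in \cref{thm:universal_approx_thm_prelim} via two independent reductions: (i) from a general compact $\Gamma$ to a closed box, and (ii) from a vector-valued $f$ to its $d$ scalar components.

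For reduction (i): since $\Gamma \subset \mathbb{R}^m$ is compact it is bounded, hence contained in some closed box $\Gamma'$, and it is also closed (compact subsets of $\mathbb{R}^m$ are closed), so $\Gamma$ is a closed subset of $\Gamma'$. By the Tietze extension theorem, each continuous coordinate function $f_j \colon \Gamma \to \mathbb{R}$ of $f$ admits a continuous extension $\tilde f_j \colon \Gamma' \to \mathbb{R}$ with $\tilde f_j|_\Gamma = f_j$. I expect this to be the only step that is not pure bookkeeping, and the place to be careful: we need $\tilde f_j$ to coincide with $f_j$ \emph{exactly} on $\Gamma$, so that for every box $B \subseteq \Gamma$ we have $\min \tilde f_j(B) = \min f_j(B)$ and $\max \tilde f_j(B) = \max f_j(B)$ — this is precisely what Tietze provides, and it matters because the conclusion is phrased in terms of these extrema while the constructed network is only ever evaluated on boxes inside $\Gamma$. (Extension is genuinely needed here: the grid and enclosing-hyperrectangle construction underlying \cref{lem:Rn_slice} requires a box domain.)

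For reduction (ii): apply \cref{thm:universal_approx_thm_prelim} to each $\tilde f_j$ on the closed box $\Gamma'$ with the $j$-th tolerance $\delta_j \geq 0$, obtaining a ReLU network $n^{(j)}$ with $[\tilde l_j + \delta_j, \tilde u_j - \delta_j] \subseteq (n^{(j)})^\sharp(B') \subseteq [\tilde l_j - \delta_j, \tilde u_j + \delta_j]$ for every $B' \in \mathcal{B}(\Gamma')$, where $\tilde l_j := \min \tilde f_j(B')$ and $\tilde u_j := \max \tilde f_j(B')$. Then assemble $n \colon \mathbb{R}^m \to \mathbb{R}^d$ by running the $n^{(j)}$ in parallel on the shared input: a first affine layer copies $x$ to the $d$ sub-networks (no precision loss, since each output coordinate of a copy map depends on a single input coordinate), the sub-networks then act through block-diagonal affine maps and coordinate-wise ReLUs, and the outputs are concatenated (padding shallower sub-networks to a common depth is routine). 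Consequently $n^\sharp(B) = (n^{(1)})^\sharp(B) \times \cdots \times (n^{(d)})^\sharp(B)$ as a box in $\mathcal{B}(\mathbb{R}^d)$, so per-coordinate bounds assemble into a bound on the whole box.

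Finally, verify the claim: let $B \in \mathcal{B}(\Gamma)$; then $B \subseteq \Gamma \subseteq \Gamma'$, so $B \in \mathcal{B}(\Gamma')$ and the bound on each $(n^{(j)})^\sharp(B)$ applies. Since $B \subseteq \Gamma$ and $\tilde f_j = f_j$ on $\Gamma$, we get $\tilde f_j(B) = f_j(B)$, hence $\tilde l_j = l_j := \min f(B)_j$ and $\tilde u_j = u_j := \max f(B)_j$. Therefore $[l_j + \delta_j, u_j - \delta_j] \subseteq n^\sharp(B)_j \subseteq [l_j - \delta_j, u_j + \delta_j]$ for all $j$, which is exactly the coordinate-wise form of $[l + \delta, u - \delta] \subseteq n^\sharp(B) \subseteq [l - \delta, u + \delta]$, completing the proof. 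Aside from the Tietze step, everything reduces to checking $\mathcal{B}(\Gamma) \subseteq \mathcal{B}(\Gamma')$ and the behavior of $(\cdot)^\sharp$ under parallel composition, both routine.
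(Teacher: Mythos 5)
Your proposal is correct and follows exactly the route the paper intends: the paper's own proof is a one-line appeal to \cref{thm:universal_approx_thm_prelim} together with the Tietze extension theorem applied per output coordinate, and you have simply spelled out the two reductions (compact domain to enclosing closed box via Tietze, vector-valued $f$ to scalar components run in parallel) that this one-liner compresses. The details you add — that the extension agrees with $f$ on $\Gamma$ so the extrema over boxes $B \subseteq \Gamma$ are unchanged, and that parallel composition of the per-coordinate networks loses no interval precision — are the right ones and match the paper's construction.
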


\begin{proof}
    This is a direct consequence of using \cref{thm:universal_approx_thm_prelim}
    and the Tietze extension theorem to produce a neural network for each
    dimension $d$ of the codomain of $f$.
\end{proof}

Note that \cref{thm:intro:universal_approx_thm} is a special case of
\cref{thm:universal_approx_thm} with $d=1$ to simplify presentation.

\section{Conclusion}
We proved that for all real valued continuous functions $f$ on compact sets,
there exists a ReLU network $n$ approximating $f$ arbitrarily well with the
interval abstraction. This means that for arbitrary input sets, analysis using
the interval relaxation yields an over-approximation arbitrarily close to the
smallest interval containing all possible outputs. Our theorem affirmatively
answers the open question, whether the Universal Approximation Theorem
generalizes to Interval analysis.

Our results address the question of whether the interval abstraction is
expressive enough to analyse networks approximating interesting functions $f$.
This is of practical importance because interval analysis is the most scalable
non-trivial analysis.

\message{^^JLASTBODYPAGE \thepage^^J}

\clearpage
\bibliography{references}
\bibliographystyle{plainnat}

\message{^^JLASTREFERENCESPAGE \thepage^^J}

\clearpage
\appendix

\section{Proofs for the Universal Interval-Certified Approximation} \label{app:universal_approx_thm}

\begin{lemma}[Monotonicity] The operations $+, -$ are monotone, that is for
    all $[a_1, b_1], [a_2, b_2], [c_1, d_1], [c_2, d_2]  \in \mathcal{B}(R)$ such
    that $[a_1, b_1] \subseteq [a_2, b_2]$ and $[c_1, d_2] \subseteq [c_2, d_2]$ holds
    \begin{align*}
        [a_1, b_1] +^\sharp [c_1, d_1] &\subseteq [a_2, d_2] +^\sharp [c_2, d_2] \\
        [a_1, b_1] -^\sharp [c_1, d_1] &\subseteq [a_2, d_2] -^\sharp [c_2, d_2] \\
        [a_1, b_1] \cdot^\sharp [c_1, d_1] &\subseteq [a_2, d_2] \cdot^\sharp [c_2, d_2]. \\
    \end{align*}
    Further the operation $*$ and $R$ are monotone, that is for all $[a, b], [c, d] \in
    \mathcal{B}(R)$ and for all $\lambda \in \mathbb{R}_{\geq 0}$  such that $[a,b]\subseteq[c,d]$ holds
    \begin{align*}
        \lambda \cdot^\sharp [a, b] &\subseteq \lambda \cdot^\sharp [c,d] \\
        R^\sharp([a, b]) &\subseteq R^\sharp([c, d]). 
    \end{align*} 
\end{lemma}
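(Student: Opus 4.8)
The plan is to verify monotonicity directly from the closed-form definitions of the four abstract transformers, working one operation at a time. In each case the hypothesis is an inclusion of boxes, which for intervals in $\mathcal{B}(\mathbb{R})$ unwinds to a pair of scalar inequalities on the endpoints; I then push these inequalities through the endpoint formulas defining the output box and read off the desired inclusion (again a pair of scalar inequalities).

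First, for addition: from $[a_1,b_1]\subseteq[a_2,b_2]$ and $[c_1,d_1]\subseteq[c_2,d_2]$ I get $a_2\le a_1$, $b_1\le b_2$, $c_2\le c_1$, $d_1\le d_2$. Since $[a_i,b_i]+^\sharp[c_i,d_i]=[a_i+c_i,\,b_i+d_i]$, adding the relevant pairs gives $a_2+c_2\le a_1+c_1$ and $b_1+d_1\le b_2+d_2$, which is exactly $[a_1,b_1]+^\sharp[c_1,d_1]\subseteq[a_2,b_2]+^\sharp[c_2,d_2]$. Second, for negation (and hence subtraction, which is $+^\sharp$ composed with $-^\sharp$): $-^\sharp[a,b]=[-b,-a]$, and $[a_1,b_1]\subseteq[a_2,b_2]$ gives $-b_2\le -b_1$ and $-a_1\le -a_2$, i.e. $-^\sharp[a_1,b_1]\subseteq -^\sharp[a_2,b_2]$; composing with the already-proved monotonicity of $+^\sharp$ yields monotonicity of $-^\sharp$ on pairs of intervals. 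Third, for scalar multiplication by $\lambda\ge 0$: $\lambda\cdot^\sharp[a,b]=[\lambda a,\lambda b]$, and multiplying the inequalities $a_2\le a_1$, $b_1\le b_2$ by the nonnegative constant $\lambda$ preserves their direction, giving the inclusion. Fourth, for the ReLU transformer $R^\sharp([a,b])=[R(a),R(b)]$: since the scalar ReLU $R(t)=\max(t,0)$ is monotone nondecreasing in $t$, $a_2\le a_1$ implies $R(a_2)\le R(a_1)$ and $b_1\le b_2$ implies $R(b_1)\le R(b_2)$, which is the claimed inclusion.

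The only mild subtlety, and the step I would state with the most care, is the $\cdot^\sharp$ case: the lemma as used in the paper concerns the special transformer $\lambda\cdot^\sharp[a,b]$ with a fixed nonnegative scalar $\lambda$ (matching the background section, where only scalar multiplication by $\lambda\in\mathbb{R}_{\ge 0}$ is defined), so I would make explicit that monotonicity is meant in the interval argument for fixed $\lambda\ge 0$ and that nonnegativity of $\lambda$ is what makes multiplication order-preserving. Everything else is a routine transfer of endpoint inequalities through affine or monotone scalar maps, so there is no real obstacle; the proof is essentially a short case check, and the composite claim for general combinations of $+,-,\cdot,R$ then follows because a composition of monotone (abstract) functions is monotone.
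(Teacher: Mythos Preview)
Your proposal is correct and follows essentially the same approach as the paper: both verify monotonicity by unfolding the endpoint formulas for each abstract transformer and reading off the required scalar inequalities. The paper's own proof is even terser (it just writes out the definitions and asserts the inclusions, and in fact never addresses the binary $\cdot^\sharp$ line at all), so your explicit handling of subtraction via $-^\sharp$ composed with $+^\sharp$ and your remark that only the scalar case $\lambda\cdot^\sharp[a,b]$ with $\lambda\ge 0$ is actually defined and needed are welcome clarifications rather than departures.
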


\begin{proof}
    \begin{align*}
        [a_1, b_1] +^\sharp [c_1, d_1] = [a_1 + c_1, b_1 + d_1] 
        &\subseteq [a_2 + c_2, b_2 + d_2] = [a_2, d_2] +^\sharp [c_2, d_2]
        \\
        [a_1, b_1] -^\sharp [c_1, d_1] = [a_1 - d_1, b_1 - c_1]
        &\subseteq [a_2 - d_2, b_2 - c_2] = [a_2, d_2] -^\sharp [c_2, d_2]
    \end{align*}
    
    \begin{align*}
        \lambda \cdot^\sharp [a, b] = [\lambda a, \lambda b] 
        &\subseteq [\lambda c, \lambda d] = [\lambda c, \lambda d]
        \\
        R^\sharp([a, b]) = [R(a), R(b)] &\subseteq [R(c), R(d)] = R^\sharp([c, d]).
    \end{align*}
\end{proof}

\begin{definition}[$N$-slicing] \label{def:app:const_N_slicing}
    Let $\Gamma \subset \mathbb{R}^m$ be a compact $m$-dimensional box and let
    $f \colon \Gamma \to \mathbb{R}$ be continuous. The \emph{$N$-slicing} of
    $f$ is a set of functions $\{f_k\}_{0 \leq k \leq N-1}$ defined by
    \begin{equation*}
        f_k \colon \Gamma \to \mathbb{R},  \quad x \mapsto 
        \begin{cases}
            0 & \text{if } f(x) \leq \xi_k,
            \\
            f(x) - \xi_k & \text{if } \xi_k < f(x) < \xi_{k+1},
            \\
            \xi_{k+1} - \xi_k & \text{ otherwise},
        \end{cases}
        \;\; \forall k \in \{0, \dots, N-1\},
    \end{equation*}
    where $\xi_k := \frac{k}{N}(\xi_{\max} - \xi_{\min})$, $k \in \{0, \dots,
    N\}$, $\xi_{\min} := \min f(\Gamma)$ and $\xi_{\max} := \max f(\Gamma)$. 
\end{definition}

\begin{lemma}[$N$-slicing] \label{lem:app:identity_func_N_slicing}
    Let $\{f_k\}_{0 \leq k \leq N-1}$ be the $N$-slicing of $f$. Then for all $x \in \Gamma$ we have $f(x) := \xi_0 + \sum_{k=0}^{N-1} f_k(x)$.  
\end{lemma}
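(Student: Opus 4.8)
The plan is to fix an arbitrary $x \in \Gamma$ and show directly that the telescoping sum $\sum_{k=0}^{N-1} f_k(x)$ collapses to $f(x) - \xi_0$. First I would locate where the value $f(x)$ sits relative to the grid $\xi_0 \le \xi_1 \le \cdots \le \xi_N$: since $\xi_0 = \min f(\Gamma) \le f(x) \le \max f(\Gamma) = \xi_N$, there is an index $j \in \{0,\dots,N-1\}$ with $\xi_j \le f(x) \le \xi_{j+1}$ (if $f(x)$ equals a grid point, pick, say, the smaller such $j$; the boundary cases will have to be checked but are harmless because of how the three cases in \cref{def:app:const_N_slicing} are split with $\le$ and $<$). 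Then I would split the sum $\sum_{k=0}^{N-1} f_k(x)$ into three blocks according to this $j$.

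For $k < j$ we have $\xi_{k+1} \le \xi_j \le f(x)$, so the third case of the definition applies and $f_k(x) = \xi_{k+1} - \xi_k$. For $k > j$ we have $f(x) \le \xi_{j+1} \le \xi_k$, so the first case applies and $f_k(x) = 0$. For $k = j$, either the middle case gives $f_j(x) = f(x) - \xi_j$, or a boundary case ($f(x) = \xi_j$ or $f(x) = \xi_{j+1}$) gives $f_j(x) = 0$ or $f_j(x) = \xi_{j+1} - \xi_j$ respectively; in every one of these subcases one checks directly that the partial contribution equals $f(x) - \xi_j$ once combined with the neighbouring block. Concretely, summing the first block telescopes:
\begin{equation*}
    \sum_{k=0}^{j-1} f_k(x) = \sum_{k=0}^{j-1} (\xi_{k+1} - \xi_k) = \xi_j - \xi_0,
\end{equation*}
the middle term contributes $f(x) - \xi_j$, and the last block contributes $0$, so the total is $\xi_j - \xi_0 + f(x) - \xi_j = f(x) - \xi_0$, i.e. $f(x) = \xi_0 + \sum_{k=0}^{N-1} f_k(x)$, as claimed.

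The only genuinely delicate point is the bookkeeping at grid-point values of $f(x)$, where the index $j$ is not unique and one must make sure the chosen case assignment is consistent across the boundary between two blocks; this is the step I would write out most carefully, perhaps by treating $f(x) = \xi_{j+1}$ as belonging to the block with the larger index and verifying the telescoping still yields $\xi_{j+1} - \xi_0$. Everything else is a routine finite telescoping computation, and the continuity of $f$ is used only to guarantee that $\xi_{\min}$ and $\xi_{\max}$ are attained so that $f(x)$ really does lie in $[\xi_0, \xi_N]$.
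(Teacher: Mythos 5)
Your proposal is correct and follows essentially the same route as the paper's proof: locate the index $j$ with $\xi_j \le f(x) \le \xi_{j+1}$, telescope the block $k<j$ to $\xi_j-\xi_0$, note the tail $k>j$ vanishes, and check that $f_j(x)=f(x)-\xi_j$ in all three cases of the definition. Your extra care with the boundary cases $f(x)=\xi_j$ or $f(x)=\xi_{j+1}$ is a harmless refinement the paper glosses over.
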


\begin{proof}
    Pick $x \in \Gamma$ and let $l \in \{0,\dots,N-1\}$ such that $\xi_l \leq f(x) \leq \xi_{l+1}$. Then
    \begin{align*}
        \xi_0 + \sum_{k=0}^{N-1} f_k(x) &= \xi_0 + \sum_{k=0}^{l-1} f_k(x) + f_l(x) + \sum_{k=l+1}^{N-1} f_k(x)
        = \xi_0 + \sum_{k=0}^{l-1} (\xi_{k+1} - \xi_k) + f_l(x) 
        \\
        &= \xi_l + f_l(x) = f(x). 
    \end{align*}
\end{proof}

\begin{definition}[clipping] 
    Let $a, b \in \mathbb{R}$, $a < b$. We define the \emph{clipping} function  
    $R_{[*, b]} \colon \mathbb{R} \to \mathbb{R}$ by 
    \begin{align*}
        R_{[*, b]}(x) &:= b - R(b - x). 
    \end{align*}
\end{definition}

\begin{lemma}[clipping] \label{lem:app:alpha_beta_clipping}
    The function $R_{[*, b]}$ sends all $x \leq b$ to $x$, and all $x > b$ to
    $b$. Further, $R_{[*, b]}^\sharp([a', b']) = [R_{[*,b]}(a'),
    R_{[*,b]}(b')]$. 
\end{lemma}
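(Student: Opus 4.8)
The plan is to verify the two claims of the lemma directly from the definition $R_{[*,b]}(x) = b - R(b-x)$, first pointwise, then for the interval transformer. For the pointwise claim, I would split on the sign of $b-x$. If $x \leq b$, then $b - x \geq 0$, so $R(b-x) = b-x$ and hence $R_{[*,b]}(x) = b - (b-x) = x$. If $x > b$, then $b - x < 0$, so $R(b-x) = 0$ and hence $R_{[*,b]}(x) = b - 0 = b$. This establishes that $R_{[*,b]}$ is the identity below $b$ and constant $b$ above it, in particular that it is continuous and monotone non-decreasing.

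For the interval claim, I would unfold $R_{[*,b]}^\sharp$ as the composition of the interval transformers of the affine maps $x \mapsto b - x$ and $y \mapsto b - y$ with $R^\sharp$, using the rules from the background section: $x \mapsto b-x$ has interval transformer $[a',b'] \mapsto [b-b', b-a']$ (negation composed with a constant shift), then $R^\sharp([b-b', b-a']) = [R(b-b'), R(b-a')]$, and finally applying $y \mapsto b - y$ again reverses the endpoints to give $[\,b - R(b-a'),\; b - R(b-b')\,] = [R_{[*,b]}(a'), R_{[*,b]}(b')]$. Here I use $a' \leq b'$, which guarantees $b - b' \leq b - a'$ so the endpoints are correctly ordered at each stage and the monotone interval rules apply verbatim.

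There is no real obstacle here — the only thing to be slightly careful about is bookkeeping the endpoint reversals from the two applications of negation, making sure that after composing everything the lower endpoint of the output is $R_{[*,b]}$ of the lower endpoint of the input (which holds precisely because $R_{[*,b]}$ is monotone non-decreasing, as shown in the first part). I would also note that this identity is exactly what makes $R_{[*,b]}^\sharp$ lose no precision: it coincides with the image $R_{[*,b]}([a',b'])$, since a monotone continuous function maps an interval exactly onto the interval of its endpoint values. This exactness is what is invoked implicitly when $R_{[*,1]}^\sharp$ is used in the construction of the local bumps $\phi_c$ and the slice networks $n_k$.
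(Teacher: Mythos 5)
Your proposal is correct and follows essentially the same route as the paper: a case split on the sign of $b-x$ for the pointwise claim, and a step-by-step unfolding of $R_{[*,b]}^\sharp = b -^\sharp R^\sharp(b -^\sharp \cdot)$ using the interval rules for negation and ReLU, with the two endpoint reversals cancelling to give $[R_{[*,b]}(a'), R_{[*,b]}(b')]$. The closing remark about exactness of $R_{[*,b]}^\sharp$ is a correct and useful observation, though not part of the lemma itself.
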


\begin{proof}
    We show the proof for $R_{[a,b]}$, the proof for $R_{[*,b]}$ is similar. 
    \begin{align*}
        x < b
        & \Rightarrow
        R_{[*,b]}(x) = b - R(b - x) = b - b + x = x
        \\
        x \geq b
        & \Rightarrow
        R_{[*,b]}(x) = b - R(b - x) = b - 0 = b
    \end{align*}
    
    Next,
    \begin{align*}
        R_{[*, b]}^\sharp([a', b']) 
        &= b -^\sharp R^\sharp(b -^\sharp [a', b']) 
        \\
        &= b -^\sharp R^\sharp(b +^\sharp [-b', -a'])
        \\
        &= b -^\sharp R^\sharp([b - b', b - a']) 
        \\
        &= b -^\sharp [R(b - b'), R(b - a')] 
        \\
        &= b +^\sharp [- R(b - a'), - R(b - b')] 
        \\
        &= [b - R(b - a'), b - R(b - b')] 
        \\
        &= [R_{[*, b]}(a'), R_{[*, b]}(b')]. 
    \end{align*}
\end{proof}

\begin{definition}[nmin] We define the ReLU network $\text{nmin} \colon
    \mathbb{R}^2 \to \mathbb{R}$ by
    \begin{equation*}
        \text{nmin}(x, y) := \frac{1}{2} 
        \begin{pmatrix}
            1 & -1 & -1 & -1
        \end{pmatrix}
        R \left(
            \begin{pmatrix}
                1 & 1 \\
                -1 & -1 \\
                1 & -1 \\
                -1 & 1 \\
            \end{pmatrix}
            \begin{pmatrix}
                x \\
                y
            \end{pmatrix}
        \right). 
    \end{equation*}
\end{definition}

\begin{lemma}[nmin]\label{lem:app:nmin}
    Let $x, y \in \mathbb{R}$, then $\text{nmin}(x,y) = \min(x,y)$. 
\end{lemma}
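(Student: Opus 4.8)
The plan is to simply carry out the matrix computation defining $\text{nmin}$ and then invoke the elementary identity $\min(x,y) = \tfrac{1}{2}(x + y - |x-y|)$. First I would multiply out the inner linear map to obtain the vector of pre-activations $\bigl(x+y,\ -(x+y),\ x-y,\ -(x-y)\bigr)^\top$, apply $R$ componentwise, and then contract with the output row vector $\tfrac{1}{2}(1,-1,-1,-1)$ to get
\begin{equation*}
    \text{nmin}(x,y) = \tfrac{1}{2}\bigl(R(x+y) - R(-(x+y)) - R(x-y) - R(-(x-y))\bigr).
\end{equation*}

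Next I would use two one-line facts about the ReLU, valid for every $t \in \mathbb{R}$: namely $R(t) - R(-t) = t$ and $R(t) + R(-t) = |t|$, each verified by the case split $t \geq 0$ versus $t < 0$. Applying the first identity with $t = x+y$ and the second with $t = x-y$ turns the expression above into $\tfrac{1}{2}\bigl((x+y) - |x-y|\bigr)$.

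Finally, I would conclude via the identity $\tfrac{1}{2}(x+y-|x-y|) = \min(x,y)$, itself a case split on whether $x \leq y$ (then $|x-y| = y-x$ and the expression is $x$) or $x > y$ (then $|x-y| = x-y$ and the expression is $y$). There is no genuine obstacle here: the whole argument is a routine unfolding of the definition. The only conceptual content is recognizing the classical min-via-absolute-value formula, together with the observation that one ReLU layer simultaneously realizes the identity and the absolute value through the combinations $R(t)-R(-t)$ and $R(t)+R(-t)$.
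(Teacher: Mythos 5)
Your proof is correct. Both you and the paper begin by unfolding the definition to get $\text{nmin}(x,y) = \tfrac{1}{2}\bigl(R(x+y) - R(-(x+y)) - R(x-y) - R(-(x-y))\bigr)$, so the starting point is identical; the difference is in how the ReLUs are resolved. The paper proceeds by brute-force case analysis: it first invokes symmetry to assume w.l.o.g.\ $x \geq y$, and then splits on the sign of $x+y$, evaluating each ReLU explicitly in each case. You instead group the four terms into the two combinations $R(t)-R(-t)=t$ (with $t=x+y$) and $R(t)+R(-t)=|t|$ (with $t=x-y$), reducing everything to the classical identity $\min(x,y)=\tfrac{1}{2}(x+y-|x-y|)$. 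Your packaging is arguably cleaner — it eliminates the case split on $x+y$ entirely and makes transparent \emph{why} the network computes a minimum (one ReLU layer realizes both the identity and the absolute value) — whereas the paper's case analysis is more mechanical but requires no auxiliary identities. Either route is a complete and routine verification.
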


\begin{proof}
    Because $\text{nmin}$ is symmetric in its arguments, we assume w.o.l.g. $x
    \geq y$. 
    \begin{align*}
        \text{nmin}(x, y) 
        &= \frac{1}{2} 
        \begin{pmatrix}
            1 & -1 & -1 & -1
        \end{pmatrix}
        R \left(
            \begin{pmatrix}
                1 & 1 \\
                -1 & -1 \\
                1 & -1 \\
                -1 & 1 \\
            \end{pmatrix}
            \begin{pmatrix}
                x \\
                y
            \end{pmatrix}
        \right)
        \\
        &= \frac{1}{2} 
        \begin{pmatrix}
            1 & -1 & -1 & -1
        \end{pmatrix}
        R \begin{pmatrix}
                x + y \\
                - x - y \\
                x - y \\
                - x + y \\
            \end{pmatrix}
    \end{align*}
    If $x + y \geq 0$, then
    \begin{align*}
        \text{nmin}(x, y) = \frac{1}{2} (x + y - x + y) = y. 
    \end{align*}
    If $x + y < 0$, then 
    \begin{align*}
        \text{nmin}(x, y) = \frac{1}{2} (x + y - x + y) = y. 
    \end{align*}
\end{proof}

\begin{definition}[$\text{nmin}_N$] \label{def:app:nmin}
    For all $N \in \mathbb{N}_{\geq 1}$, we
    define a ReLU network $\text{nmin}_N$ defined by 
    \begin{align*}
        \text{nmin}_1(x) &:= x
        \\
        \text{nmin}_N(x_1, \dots, x_N) &:= \text{nmin}(\text{nmin}_{\lceil N/2 \rceil}(x_1, \dots, x_{\lceil N/2 \rceil}), \text{nmin}_{\lceil N/2 \rceil + 1}(x_{\lceil N / 2 \rceil + 1}, \dots, x_N)). 
    \end{align*}
\end{definition}

\begin{lemma} \label{lem:app:min_abstract}
    Let $[a, b], [c, d] \in \mathcal{B}(\mathbb{R})$. Then
    $\text{nmin}^\sharp([a, b], [c, d]) = \text{nmin}^\sharp([c, d], [a, b])$
    and 
    \begin{equation*}
        \text{nmin}^\sharp([a, b], [c, d]) = 
        \begin{cases}
         [c + \tfrac{a - b}{2}, d + \tfrac{b - a}{2}] & \text{if } d \leq a 
         \\
         [a + \tfrac{c - d}{2}, b + \tfrac{d - c}{2}] & \text{if } a \leq d \text{ and } b < c
         \\
         [a + c - \tfrac{b + d}{2}, \tfrac{b + d}{2}] & \text{if } a \leq d \text{ and } b \geq c
        \end{cases}
     \end{equation*}
\end{lemma}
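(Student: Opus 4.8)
The plan is to push the two input boxes $[a,b]$ (for $x$) and $[c,d]$ (for $y$) through the three stages of $\text{nmin}$ — the affine input map, the componentwise ReLU, and the affine output map — using only the interval rules $+^\sharp,-^\sharp,\lambda\cdot^\sharp,R^\sharp$ from \cref{sec:background}, and then to finish with a case split on the signs of the arguments appearing inside the ReLUs. Before the computation I would dispatch the symmetry claim: swapping $x$ and $y$ only exchanges the third and fourth entries of the pre-activation vector $(x+y,\,-x-y,\,x-y,\,-x+y)$, and these two entries enter the output functional $\tfrac12(1,-1,-1,-1)$ with the same coefficient, so both $\text{nmin}$ and its interval transformer $\text{nmin}^\sharp$ are invariant under swapping their arguments (this is also consistent with \cref{lem:app:nmin}).

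For the core computation, the affine first layer maps the input box to the four pre-activation intervals $[a+c,\,b+d]$, $[-b-d,\,-a-c]$, $[a-d,\,b-c]$, $[c-b,\,d-a]$. Applying $R^\sharp$ componentwise (it acts on the two endpoints, since $R$ is monotone) and then the output map $z\mapsto\tfrac12(z_1-z_2-z_3-z_4)$ — being careful that multiplying an interval by a negative scalar swaps its endpoints — gives
\begin{align*}
    \text{nmin}^\sharp([a,b],[c,d]) = \Big[\,&\tfrac12\big(R(a+c) - R(-a-c) - R(b-c) - R(d-a)\big),\\
    &\tfrac12\big(R(b+d) - R(-b-d) - R(a-d) - R(c-b)\big)\,\Big].
\end{align*}
Using the elementary identity $R(t)-R(-t)=t$ on the first two summands of each endpoint, this simplifies to $\big[\tfrac12((a+c)-R(b-c)-R(d-a)),\ \tfrac12((b+d)-R(a-d)-R(c-b))\big]$.

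The final step is the case analysis. In each of the three cases the stated hypotheses, together with $a\le b$ and $c\le d$, fix the signs of $b-c$, $d-a$, $a-d$ and $c-b$, so each remaining $R(\cdot)$ is replaced either by its argument or by $0$, and collecting terms produces exactly the three claimed intervals. For example, if $d\le a$ then $b-c\ge0$, $d-a\le0$, $a-d\ge0$, $c-b\le0$, so the lower endpoint is $\tfrac12((a+c)-(b-c)) = c+\tfrac{a-b}{2}$ and the upper endpoint is $\tfrac12((b+d)-(a-d)) = d+\tfrac{b-a}{2}$; the cases $a\le d,\,b<c$ and $a\le d,\,b\ge c$ are handled the same way. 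I do not expect any genuine obstacle: the proof is a direct computation, and the only thing demanding care is the bookkeeping of endpoints under the negative scalar multiplications and the sign determinations in the case split.
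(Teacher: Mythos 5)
Your proposal is correct and follows essentially the same route as the paper's proof: propagate the box through the affine layer, apply $R^\sharp$ componentwise, apply the output functional with the endpoint swap for negative coefficients, simplify via $R(t)-R(-t)=t$, and finish with the sign-based case split on $b-c$, $d-a$, $a-d$, $c-b$. Your explicit justification of the symmetry claim (the paper calls it ``immediate'') and your sign determinations in each case are accurate, so there is nothing to add.
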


\begin{proof}
    The symmetry on abstract elements is immediate. In the following, we omit some of $\sharp$ to improve readability. 
    \begin{align*}
        \text{nmin}^\sharp([a, b], [c, d]) 
        &= \frac{1}{2}
            \begin{pmatrix}
                1 & -1 & -1 & -1 
            \end{pmatrix}
            R^\sharp \left(
            \begin{pmatrix}
                1 & 1 \\
                -1 & -1 \\
                1 & -1 \\
                -1 & 1 
            \end{pmatrix}
            \begin{pmatrix}
                [a, b] \\
                [c, d]
            \end{pmatrix}
            \right)
        \\
        &= \frac{1}{2}
            \begin{pmatrix}
                1 & -1 & -1 & -1 
            \end{pmatrix}
            R^\sharp \left(
            \begin{pmatrix}
                [a, b] + [c, d] \\
                -[a, b] -[c, d] \\
                [a, b] -[c, d] \\
                -[a, b] + [c, d] 
            \end{pmatrix}
            \right)
        \\
        &= \frac{1}{2}
            \begin{pmatrix}
                1 & -1 & -1 & -1 
            \end{pmatrix}
            R^\sharp \left(
            \begin{pmatrix}
                [a + c, b + d] \\
                [- b - d , - a - c] \\
                [a - d , b - c] \\
                [c - b, d - a] 
            \end{pmatrix}
            \right)
        \\
        &= \frac{1}{2}
            \begin{pmatrix}
                1 & -1 & -1 & -1 
            \end{pmatrix}
            \begin{pmatrix}
                [R(a + c), R(b + d)] \\
                [R(- b - d), R(- a - c)] \\
                [R(a - d), R(b - c)] \\
                [R(c - b), R(d - a)] 
            \end{pmatrix}
        \\
        &= \frac{1}{2}
            (
                [R(a + c), R(b + d)] 
                - [R(- b - d), R(- a - c)]
                \\
                & \quad - [R(a - d), R(b - c)]
                - [R(c - b), R(d - a)] 
            )
        \\
        &= \frac{1}{2}
        (
            [R(a + c), R(b + d)] 
            + [- R(- a - c), - R(- b - d)]
            \\
            & \quad + [- R(b - c), - R(a - d)]
            + [- R(d - a), - R(c - b)] 
        )
        \\
        &= \frac{1}{2}
        (
            [R(a + c) - R(- a - c), R(b + d) - R(- b - d)] 
            \\
            & \quad + [- R(b - c) - R(d - a), - R(a - d) - R(c - b)]
        )
    \end{align*}
    Claim: $R(a + c) - R(- a - c) = a + c$. If $a + c > 0$ then $- a - c < 0$ thus the claim in this case. Indeed: If $a + c \leq 0$ then $ - a - c \geq 0$ thus $R(a + c) - R(- a - c) = - R(- a - c) = -(-a -c) = a + c$. Similarly $R(b + d) - R(- b - d) = b + d$. 

    So the expression simplifies to
    \begin{align*}
        \text{nmin}^\sharp([a, b], [c, d]) 
        &= \frac{1}{2}
        (
            [a + c, b + d] 
            + [- R(b - c) - R(d - a), - R(a - d) - R(c - b)]
        )
    \end{align*}

    We proceed by case distinction:

    Case 1: $b - c \leq 0$: Then $a \leq b \leq c \leq d$: 
    \begin{align*}
            \text{nmin}^\sharp([a, b], [c, d]) 
            &= \frac{1}{2}
            (
                [a + c, b + d] 
                + [a - d, b - c]
            ) 
            \\
            &= \frac{1}{2} ( [a + c + a - d, b + d + b - c] )
            \\
            &= [a + \tfrac{c - d}{2}, b + \tfrac{d - c}{2}]
        \end{align*}

    Case 2: $a - d \geq 0$: Then $c \leq d \leq a \leq b$. By symmetry of $\text{nmin}$ equivalent to Case 1. Hence
    \begin{align*}
        \text{nmin}^\sharp([a, b], [c, d]) 
        &= [c + \tfrac{a - b}{2}, d + \tfrac{b - a}{2}]. 
    \end{align*}
    
    Case 3: $a - d < 0$ and $b - c > 0$: 
    \begin{align*}
        \text{nmin}^\sharp([a, b], [c, d]) 
        &= \frac{1}{2}
        (
            [a + c, b + d] 
            + [c - b - d + a, 0]
        )
        \\
        &= \frac{1}{2} ( [a + c + c - b - d + a, b + d] )
        \\
        &= [a + c - \tfrac{b+d}{2}, \tfrac{b+d}{2}]
    \end{align*}

    Thus we have 
    \begin{equation*}
        \text{nmin}^\sharp([a, b], [c, d]) = 
        \begin{cases}
            [a + \tfrac{c - d}{2}, b + \tfrac{d - c}{2}] & \text{if } b \leq c
            \\
            [c + \tfrac{a - b}{2}, d + \tfrac{b - a}{2}] & \text{if } d \leq a 
            \\
            [a + c - \tfrac{b + d}{2}, \tfrac{b + d}{2}] & \text{if } a < d \text{ and } b > c
        \end{cases}
     \end{equation*}
\end{proof}

\begin{definition}[neighboring grid points]
    Let $G$ be as above. We define the set of \emph{neighboring grid points} of $x \in \Gamma$ by
    \begin{equation*}
        \mathcal{N}(x) := \{g \in G \mid g \in ||x - g|| \leq \tfrac{1}{M}\} \setminus \{x\}. 
    \end{equation*}
    For $U \subset \mathbb{R}^m$, we define $\mathcal{N}(U) := \{ \mathcal{N}(x) \mid x \in U \} \setminus U$. 
\end{definition}

\begin{definition}[local bump] \label{def:app:local_bump}
    Let $M \in \mathbb{N}$, $G := (\frac{\mathbb{Z}}{M})^m$, $\ell = 2^{\lceil \log_2 2m \rceil+1}$ and let $c = \{\tfrac{i_1^l}{M}, \tfrac{i_1^u}{M}\} \times \cdots \times \{\tfrac{i_m^l}{M}, \tfrac{i_m^u}{M}\} \subseteq G$. 
    We define a ReLU neural network $\phi_c \colon \mathbb{R}^m \to [0, 1]$ w.r.t. the grid $G$ by 
    \begin{equation*}
        \phi_c(x) 
        := R \left(\text{nmin}_{2m}
            \bigcup_{1 \leq k \leq m} 
            \left\{ 
                R_{[*,1]}(M \ell (x_k - \tfrac{i_k^l}{M}) + 1), 
                R_{[*,1]}(M \ell (\tfrac{i_k^u}{M} - x_k) + 1) 
            \right\} 
        \right)
    \end{equation*}
\end{definition}

\begin{lemma} \label{lem:app:local_bump}
    It holds:
    \begin{equation*}
        \phi_c(x) 
        := 
        \begin{cases}
            0 & \text{if } x \notin \text{conv}(\mathcal{N}(c))
            \\
            1 & \text{if } x \in \text{conv}(c)
            \\
            \min \left( 0, \bigcup_{k=1}^m 
            \{ M \ell (x_k - \tfrac{i_k^l}{M}) + 1 \}
            \cup 
            \{ M \ell (\tfrac{i_k^u}{M} - x_k) + 1 \} \right)
            & \text{otherwise}. 
        \end{cases}    
    \end{equation*}
\end{lemma}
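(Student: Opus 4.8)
\textit{Proof plan.} The plan is to first collapse $\phi_c$ into a single explicit piecewise‑affine formula and then read off the three cases by a sign analysis. Write $\alpha_k(x) := M\ell(x_k - \tfrac{i_k^l}{M}) + 1$ and $\beta_k(x) := M\ell(\tfrac{i_k^u}{M} - x_k) + 1$ for $1 \le k \le m$. By \cref{def:app:local_bump}, $\phi_c = R \circ \text{nmin}_{2m} \circ \big(R_{[*,1]}\circ\alpha_1,\,R_{[*,1]}\circ\beta_1,\dots,R_{[*,1]}\circ\alpha_m,\,R_{[*,1]}\circ\beta_m\big)$. From \cref{lem:app:alpha_beta_clipping} we have $R_{[*,1]}(t) = \min(1,t)$, and from \cref{def:app:nmin} together with \cref{lem:app:nmin} an easy induction on $N$ gives $\text{nmin}_N(t_1,\dots,t_N)=\min(t_1,\dots,t_N)$. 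Since $\min$ distributes, $\min_j\min(1,t_j)=\min(1,\min_j t_j)$, and $R=\max(0,\cdot)$, so we obtain the closed form
\begin{equation*}
  \phi_c(x)=\max\!\Big(0,\ \min\!\big(1,\ t(x)\big)\Big),\qquad t(x):=\min_{1\le k\le m}\{\alpha_k(x),\beta_k(x)\}.
\end{equation*}
This identity is the only real computation; the rest is a case split on the value of $t(x)$.

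Next I would pin down the relevant geometry. Observe that $\alpha_k(x)\ge 1$ exactly when $x_k\ge \tfrac{i_k^l}{M}$ and $\beta_k(x)\ge 1$ exactly when $x_k\le\tfrac{i_k^u}{M}$, so $t(x)\ge 1$ precisely when $\tfrac{i_k^l}{M}\le x_k\le \tfrac{i_k^u}{M}$ for every $k$, i.e. precisely when $x\in\text{conv}(c)$; in that case the closed form gives $\phi_c(x)=1$, which is the second case. For the first case I would verify that $\text{conv}(\mathcal{N}(c))$ is the axis‑aligned box $\prod_{k=1}^m[\tfrac{i_k^l-1}{M},\tfrac{i_k^u+1}{M}]$: every point of $\mathcal{N}(c)$ lies within $\ell_\infty$‑distance $\tfrac1M$ of a corner of $\text{conv}(c)$ and hence inside this box, while all $2^m$ extreme corners of the box are themselves neighbouring grid points of corners of $c$, hence lie in $\mathcal{N}(c)$ (the degenerate situation $i_k^l=i_k^u$ is checked separately). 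Consequently $x\notin\text{conv}(\mathcal{N}(c))$ forces some coordinate $k$ with $x_k<\tfrac{i_k^l-1}{M}$ or $x_k>\tfrac{i_k^u+1}{M}$, whence the corresponding $\alpha_k(x)$ or $\beta_k(x)$ is $<1-\ell\le 0$ (here $\ell\ge 1$ already suffices), so $t(x)<0$ and $\phi_c(x)=\max(0,\min(1,t(x)))=0$.

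For the remaining case, $x\in\text{conv}(\mathcal{N}(c))\setminus\text{conv}(c)$: since $x\notin\text{conv}(c)$ there is a coordinate making $\alpha_k(x)<1$ or $\beta_k(x)<1$, so $t(x)<1$ and the inner truncation at $1$ is inactive, leaving $\phi_c(x)=\max(0,t(x))=R\big(\min_{1\le k\le m}\{\alpha_k(x),\beta_k(x)\}\big)$, which is the formula claimed in the third branch. I expect the main obstacle to be the geometric bookkeeping rather than the algebra: one must confirm that $\text{conv}(\mathcal{N}(c))$ really is the $\tfrac1M$‑enlarged box, including the degenerate‑coordinate cases, and that the threshold $\tfrac1{M\ell}$ at which $R_{[*,1]}$ first begins to truncate lies strictly inside that box, so that the three regions $\text{conv}(c)$, $\mathbb{R}^m\setminus\text{conv}(\mathcal{N}(c))$, and the annulus between them partition the behaviour of $\phi_c$ consistently with the stated formula. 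Once $\text{nmin}_{2m}=\min$ and $R_{[*,1]}=\min(1,\cdot)$ are in hand, the reduction to the closed form is routine.
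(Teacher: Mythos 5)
Your proof is correct and follows essentially the same route as the paper's: both reduce $\text{nmin}_{2m}$ to an exact $\min$ and $R_{[*,1]}$ to $\min(1,\cdot)$ and then case-split on the three regions, with your closed form $\phi_c=\max(0,\min(1,t(x)))$ merely packaging the paper's case analysis more cleanly (and making explicit the geometric fact that $\text{conv}(\mathcal{N}(c))$ is the $\tfrac1M$-enlarged box, which the paper uses implicitly). One remark: in the third branch your derivation yields $R\bigl(\min_k\{\alpha_k(x),\beta_k(x)\}\bigr)=\max\bigl(0,\min_k\{\alpha_k(x),\beta_k(x)\}\bigr)$, which is the intended value; the $\min(0,\cdot)$ printed in the lemma statement is a typo (it would force $\phi_c\le 0$ on that region, contradicting $\phi_c\colon\mathbb{R}^m\to[0,1]$), and the paper's own proof prose agrees with your $\max$.
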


\begin{proof} 
    By case distinction:
    \begin{itemize}
        \item Case $x \notin \mathcal{N}(c)$. Then there exists $k$, such that
        either $x_k < \tfrac{i_k^l-1}{M}$ or $x_k > \tfrac{i_k^u+1}{M}$. Then $M
        \ell (x_k - \tfrac{i_k^l}{M}) + 1$  or $M \ell (\tfrac{i_k^u}{M} - x_k)
        + 1$ is less or equal to 0. Hence
        \begin{equation*}
            \phi_c(x) = 0. 
        \end{equation*}
        \item Case $x \in \text{conv}(c)$. Then for all $k$ holds $\tfrac{i_k^l}{M} \leq x_k \leq \tfrac{i_k^u}{M}$. Thus $M \ell (x_k - \tfrac{i_k^l}{M}) + 1 \geq 1$ and $M \ell (\tfrac{i_k^u}{M} - x_k) + 1 \geq 1$ for all k Hence
        \begin{equation*}
            \phi_c(x) = 1. 
        \end{equation*}
        where $\alpha \geq 1$. 
        \item Case otherwise: For all $x$ exists a $k$ such that $ M \ell (x_k - \tfrac{i_k^l}{M}) + 1$ or $M \ell (\tfrac{i_k^u}{M} - x_k) + 1$ is smaller or equal to all other arguments of the function $\min$ and smaller or equal to $1$. If the smallest element is smaller than 0, then $\phi_c(x)$ will evaluate to 0, otherwise it will evaluate to $ M \ell (x_k - \tfrac{i_k^l}{M}) + 1$ or $M \ell (\tfrac{i_k^u}{M} - x_k) + 1$.  
        % s.t. for all $(\pm_k, i_k/M)$ holds $0 \leq \pm_k(x_k - \tfrac{i_k - \pm_k 1}{M}) \leq \pm_l(x_l - \tfrac{i_l - \pm_l 1}{M})$. 
        Thus we can just drop $R$ and $R_{[*,1]}$ from the equations and take the minimum also over 0: 
        \begin{align*}
            \phi_c(x) 
            &= R \left(\min 
            \bigcup_{k=1}^m 
                \left\{ 
                    R_{[*,1]}(M \ell (x_k - \tfrac{i_k^l}{M}) + 1), 
                    R_{[*,1]}(M \ell (\tfrac{i_k^u}{M} - x_k) + 1) 
                \right\} 
            \right)
            \\
            &= \min \left(0, 
            \bigcup_{k=1}^m 
                \{ (M \ell (x_k - \tfrac{i_k^l}{M}) + 1) \}
                \cup
                \{ (M \ell (\tfrac{i_k^u}{M} - x_k) + 1) \}
            \right)
            \\
            &= \min \bigcup_{k=0}^m \{M \ell (x_k - \tfrac{i_k^l}{M}) + 1\} \cup \{M \ell (\tfrac{i_k^u}{M} - x_k) + 1 \}
        \end{align*}
    \end{itemize}
\end{proof}

\begin{lemma}
    Let $[u_1, 1], \dots, [u_{N}, 1]$ be abstract
    elements of the Interval Domain $\mathcal{B}$. Then 
    \begin{equation*}
        \text{nmin}^\sharp_{N}([u_1, 1], \dots, [u_N, 1]) = [u_1 + \cdots u_N + 1 - N, 1]. 
    \end{equation*} 
\end{lemma}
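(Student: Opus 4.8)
The plan is to prove the identity by strong induction on $N$, using the closed form for $\text{nmin}^\sharp$ from \cref{lem:app:min_abstract} and the recursive definition of $\text{nmin}_N$ in \cref{def:app:nmin}. The structural invariant I want to carry through the induction is that $\text{nmin}^\sharp_N$, applied to abstract inputs each of the form $[\,\cdot\,,1]$ with lower endpoint at most $1$, again returns an interval of exactly that shape; this is precisely what keeps every internal $\text{nmin}$ node in a single branch of the case split of \cref{lem:app:min_abstract}.

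For the base case $N=1$ we have $\text{nmin}_1(x)=x$, so $\text{nmin}^\sharp_1([u_1,1])=[u_1,1]=[u_1+1-1,1]$. For the inductive step, I would write $\text{nmin}_N$ as $\text{nmin}$ applied to $\text{nmin}_a$ on the first $a:=\lceil N/2\rceil$ coordinates and to $\text{nmin}_b$ on the remaining $b:=N-a$ coordinates, where $1\le a,b<N$ for $N\ge 2$. Applying the induction hypothesis to each block gives
\begin{align*}
    \text{nmin}^\sharp_a([u_1,1],\dots,[u_a,1]) &= [A,1], & A&:=u_1+\cdots+u_a+1-a,\\
    \text{nmin}^\sharp_b([u_{a+1},1],\dots,[u_N,1]) &= [B,1], & B&:=u_{a+1}+\cdots+u_N+1-b.
\end{align*}
Since each $u_i\le 1$ (the inputs are valid intervals), we get $A\le a+1-a=1$ and likewise $B\le 1$, so the invariant holds for both sub-results.

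It then remains to evaluate $\text{nmin}^\sharp([A,1],[B,1])$. Instantiating \cref{lem:app:min_abstract} with $(a,b,c,d)=(A,1,B,1)$, the conditions $a\le d$ (i.e.\ $A\le 1$) and $b\ge c$ (i.e.\ $1\ge B$) both hold, so we land in the third case and obtain $[A+B-\tfrac{1+1}{2},\tfrac{1+1}{2}]=[A+B-1,1]$. A one-line computation gives $A+B-1=\sum_{i=1}^N u_i+1-N$, which is the claimed lower endpoint, and the upper endpoint is $1$, so the invariant is re-established. I do not expect a real obstacle: the only points needing care are checking that the recursion always splits into two nonempty blocks (so strong induction applies) and confirming that the lower-endpoint-$\le 1$ invariant is preserved so that exactly the third branch of \cref{lem:app:min_abstract} fires at every internal $\text{nmin}$.
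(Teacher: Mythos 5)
Your proposal is correct and follows essentially the same route as the paper: strong induction on $N$ via the recursive split into blocks of size $\lceil N/2\rceil$ and $N-\lceil N/2\rceil$, followed by an application of \cref{lem:app:min_abstract} to combine the two halves. The only difference is that you explicitly verify the invariant $A,B\le 1$ needed to land in the third branch of \cref{lem:app:min_abstract}, a detail the paper leaves implicit.
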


\begin{proof}
    By induction. Base case: Let $N = 1$. Then $\text{nmin}_1^\sharp([u_1, 1]) = [u_1, 1]$. Let $N = 2$. Then $\text{nmin}_2^\sharp([u_1, 1], [u_2, 1]) = [u_1 + u_2 - 1, 1]$. 

    Induction hypothesis: The property holds for $N'$ s.t. $0 < N' \leq N-1$.

    Induction step: Then it also holds for $N$:
    \begin{align*}
        \text{nmin}^\sharp_{N}([u_1, 1], \dots, [u_N, 1]) 
        &= \text{nmin}^\sharp(
        \text{nmin}^\sharp_{\lceil N/2 \rceil}([u_1, 1], \dots, [u_{\lceil N/2 \rceil}, 1]), 
        \\ &\qquad \text{nmin}^\sharp_{N - \lceil N/2 \rceil}([u_{\lceil N/2 \rceil +1}, 1], \dots, [u_N, 1]))
        \\
        &= \text{nmin}^\sharp(
            [u_1 + \cdots + u_{\lceil N/2 \rceil} + 1 - \lceil N/2 \rceil, 1], 
        \\ &\qquad [u_{\lceil N/2 \rceil +1} + \cdots u_N + 1 - N + \lceil N/2 \rceil, 1]
            )
        \\
        &\overset{\cref{lem:app:min_abstract}}{=}
        [u_1 + \cdots + u_N + 2 - \lceil N/2 \rceil - N + \lceil N/2 \rceil - 1, 1] 
        \\
        &=
        [u_1 + \cdots + u_N + 1 - N, 1] 
    \end{align*}
\end{proof}

\begin{lemma} \label{lem:app:min_abstract_minimal_elem}
    Let $[a, b], [u, 1] \in
    \mathcal{B}(\mathbb{R}_{\leq 1})$. Then 
    \begin{equation*}
        \text{nmin}^\sharp([a, b], [u, 1]) \subseteq [a + \tfrac{u - 1}{2}, \tfrac{b + 1}{2}]
    \end{equation*}
\end{lemma}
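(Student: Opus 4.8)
The plan is to reduce the claim to the closed-form description of $\text{nmin}^\sharp$ proved in \cref{lem:app:min_abstract}, instantiated with the second argument equal to $[u,1]$, that is, with $c := u$ and $d := 1$. From the hypothesis $[a,b],[u,1] \in \mathcal{B}(\mathbb{R}_{\leq 1})$ I will only need the elementary facts $a \leq b$, $b \leq 1$ and $u \leq 1$. \cref{lem:app:min_abstract} expresses $\text{nmin}^\sharp([a,b],[u,1])$ by a three-way case split, and in each case I would read off its two endpoints and compare them with the target endpoints $a + \tfrac{u-1}{2}$ and $\tfrac{b+1}{2}$.

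In the branch $d \leq a$ the inequalities $1 \leq a \leq b \leq 1$ force $a = b = 1$, so \cref{lem:app:min_abstract} gives $\text{nmin}^\sharp([1,1],[u,1]) = [u,1]$, and the inclusion into $[\tfrac{u+1}{2},1]$ is immediate from $u \leq 1$. In the \emph{ordered} branch $b \leq c = u$, \cref{lem:app:min_abstract} yields $[a + \tfrac{u-1}{2},\, b + \tfrac{1-u}{2}]$: the left endpoint already coincides with the target, and after clearing denominators the right-endpoint inequality $b + \tfrac{1-u}{2} \leq \tfrac{b+1}{2}$ is precisely the case hypothesis $b \leq u$. In the remaining \emph{interleaved} branch $a < d = 1$ and $b > c = u$, \cref{lem:app:min_abstract} gives $[a + u - \tfrac{b+1}{2},\, \tfrac{b+1}{2}]$; the right endpoint again matches exactly, so the whole content of the lemma here is the single lower-bound estimate $a + u - \tfrac{b+1}{2} \geq a + \tfrac{u-1}{2}$, to be drawn from $u, b \leq 1$ together with the case constraint $b > u$.

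I expect the interleaved branch to be the main obstacle. The other two branches are one-line comparisons forced by their own case hypotheses, but in the interleaved branch the gap between the actual and the target left endpoint equals $\tfrac{u-b}{2}$, so the estimate is tight and must be extracted from the exact third formula of \cref{lem:app:min_abstract} rather than from any slack in the bounds; a little extra care is also needed because the three cases of \cref{lem:app:min_abstract} overlap on their boundaries, which is harmless for a $\subseteq$-statement but deserves a remark. Once the three branches are checked the proof is complete, as the statement concerns only a single application of $\text{nmin}^\sharp$.
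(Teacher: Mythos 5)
Your reduction to \cref{lem:app:min_abstract} and your handling of the first two branches match the paper's own proof, but the step you defer in the interleaved branch is exactly where the argument breaks, and it cannot be repaired from the stated hypotheses. You correctly compute that the gap between the actual left endpoint $a+u-\tfrac{b+1}{2}$ and the target left endpoint $a+\tfrac{u-1}{2}$ equals $\tfrac{u-b}{2}$; but in that branch the case constraint is $b>u$, so this gap is strictly \emph{negative}. The inequality $a+u-\tfrac{b+1}{2}\geq a+\tfrac{u-1}{2}$ is equivalent to $u\geq b$, i.e.\ to the negation of the case you are in, so it cannot be ``drawn from $u,b\leq 1$ together with $b>u$'' --- it is refuted by them. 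Concretely, take $[a,b]=[0,1]$ and $[u,1]=[-1,1]$, both in $\mathcal{B}(\mathbb{R}_{\leq 1})$: the third case of \cref{lem:app:min_abstract} (and a direct computation) gives $\text{nmin}^\sharp([0,1],[-1,1])=[-2,1]$, which is not contained in $[a+\tfrac{u-1}{2},\tfrac{b+1}{2}]=[-1,1]$. So the lemma as stated is false in the interleaved regime, and no proof along your lines (or any other) can close this branch.

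For what it is worth, the paper's own proof commits the same reversal: it asserts $a+u-\tfrac{b+1}{2}\geq a+u-\tfrac{u+1}{2}$ under the hypothesis $u\leq b$, which requires $\tfrac{b+1}{2}\leq\tfrac{u+1}{2}$, i.e.\ $b\leq u$. The statement only holds with a weaker left endpoint (e.g.\ $a+u-\tfrac{b+1}{2}$, or merely ``some $u'\in\mathbb{R}$''), and its sole use --- the base case of \cref{lem:app:min_estimate} --- in fact only relies on the upper endpoint $\tfrac{b+1}{2}$, so the downstream results survive the correction. But as a proof of the lemma as written, your interleaved branch would fail; the right move is to flag the statement itself rather than promise an estimate that does not exist.
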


\begin{proof}
    \begin{align*}
        \text{nmin}^\sharp([a, b], [u, 1]) 
        &= \begin{cases}
            [a + \tfrac{u - 1}{2}, b + \tfrac{1 - u}{2}] &\text{if } b \leq u
            \\
            [a + u - \tfrac{b + 1}{2}, \tfrac{b + 1}{2}] &\text{if } b \geq u
        \end{cases}
    \end{align*}
    If $b \leq u$ then $b + \tfrac{1-u}{2} \leq b + \tfrac{1 - b}{2} = \tfrac{b+1}{2}$. If $u \leq b$ then $a + u - \tfrac{b + 1}{2} \geq a + u - \tfrac{u+1}{2} = a + \tfrac{u-1}{2}$. So 
    \begin{equation*}
        \text{nmin}^\sharp([a, b], [u, 1]) \subseteq [a + \tfrac{u - 1}{2}, \tfrac{b + 1}{2}]. 
    \end{equation*}
\end{proof}

\begin{lemma} \label{lem:app:min_estimate}
    Let $N \in \mathbb{N}_{\geq 2}$, let $[u_1, 1], \dots, [u_{N-1}, 1], [u_N, d] \in \mathcal{B}(\mathbb{R})$ s.t. $b \leq 1$ be abstract elements of the Interval Domain $\mathcal{B}$. Furthermore, let $H(x) := \tfrac{1+x}{2}$. Then there exists a $u \in \mathbb{R}$ s.t. 
    \begin{equation*}
        \text{nmin}^\sharp_{N}([u_1, 1], \dots, [u_{N-1}, 1], [u_N, d]) 
        \subseteq 
        [u, H^{\lceil \log_2 N \rceil + 1}(d)]
    \end{equation*}
\end{lemma}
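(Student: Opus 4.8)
Since the statement only asks for the \emph{existence} of a lower endpoint $u$, it suffices to take $u$ to be the actual left endpoint of $\text{nmin}^\sharp_N(\dots)$ and prove the single nontrivial inequality: the right endpoint of $\text{nmin}^\sharp_N([u_1,1],\dots,[u_{N-1},1],[u_N,d])$ is at most $H^{\lceil\log_2 N\rceil+1}(d)$. I would prove this by induction on $N$, using the recursive definition of $\text{nmin}_N$ as a (nearly) balanced binary tree. The key structural observation is that the one ``short'' leaf $[u_N,d]$ is, at the top level, routed into the right sub-call $\text{nmin}_{\lfloor N/2\rfloor}(x_{\lceil N/2\rceil+1},\dots,x_N)$ and remains its last argument, while the left sub-call $\text{nmin}^\sharp_{\lceil N/2\rceil}([u_1,1],\dots,[u_{\lceil N/2\rceil},1])$ has \emph{all} its inputs of the form $[\cdot,1]$, hence by the all-ones lemma above evaluates to an interval whose right endpoint equals $1$ and whose left endpoint is $\le 1$.

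Before the induction I would record two elementary facts about $H(x)=\tfrac{1+x}{2}$ restricted to $(-\infty,1]$: it is strictly increasing, and $x\le H(x)\le 1$ whenever $x\le 1$. Together these give that for $d\le 1$ the iterates $H^0(d),H^1(d),H^2(d),\dots$ form a non-decreasing sequence bounded by $1$, so $H^j(d)\le H^k(d)$ whenever $j\le k$; this is exactly the slack that the ``$+1$'' in the exponent provides. I would also isolate the \emph{one-step estimate}: if $X$ has right endpoint $1$ and $Y\in\mathcal{B}(\mathbb{R}_{\le 1})$, then by the symmetry of $\text{nmin}^\sharp$ (\cref{lem:app:min_abstract}) and by \cref{lem:app:min_abstract_minimal_elem}, the right endpoint of $\text{nmin}^\sharp(X,Y)$ is at most $\tfrac{1+\overline{Y}}{2}=H(\overline{Y})$, where $\overline{Y}$ denotes the right endpoint of $Y$.

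For the induction, the base cases $N=2$ and $N=3$ are handled directly: in both, one application of the one-step estimate (with $X$ the all-ones part and $Y$ equal to $[u_N,d]$, using $\text{nmin}^\sharp_1([u_N,d])=[u_N,d]$ when $N=3$) gives right endpoint $\le H(d)$, and $H(d)\le H^{\lceil\log_2 N\rceil+1}(d)$ since $\lceil\log_2 N\rceil+1\ge 2$ and $d\le 1$. For $N\ge 4$, write $\text{nmin}^\sharp_N(\dots)=\text{nmin}^\sharp(L,R)$ with $L$ the left sub-call (right endpoint $1$) and $R$ the right sub-call on $\lfloor N/2\rfloor\ge 2$ arguments, of which $\lfloor N/2\rfloor-1$ are $[\cdot,1]$ and the last is $[u_N,d]$. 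The induction hypothesis bounds the right endpoint of $R$ by $H^{\lceil\log_2\lfloor N/2\rfloor\rceil+1}(d)$, and the one-step estimate then bounds the right endpoint of $\text{nmin}^\sharp(L,R)$ by $H^{\lceil\log_2\lfloor N/2\rfloor\rceil+2}(d)$. It remains to compare exponents: $\lfloor N/2\rfloor\le N/2$ gives $\lceil\log_2\lfloor N/2\rfloor\rceil\le\lceil\log_2(N/2)\rceil=\lceil\log_2 N\rceil-1$, hence $\lceil\log_2\lfloor N/2\rfloor\rceil+2\le\lceil\log_2 N\rceil+1$, and the padding fact finishes the step.

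The only delicate points are this exponent bookkeeping with $\lceil\cdot\rceil$ and $\lfloor\cdot\rfloor$ — which is where the loose ``$+1$'' is spent and which forces $N=3$ into the base cases so that $\lfloor N/2\rfloor\ge 2$ in the inductive step — and checking the side conditions so that \cref{lem:app:min_abstract_minimal_elem} is applicable at each step, namely that both arguments lie in $\mathcal{B}(\mathbb{R}_{\le 1})$, which holds because the left endpoint of the all-ones interval $[u_1+\cdots+u_{\lceil N/2\rceil}+1-\lceil N/2\rceil,1]$ is $\le 1$ and the right endpoint of $R$ is $\le H^{\dots}(d)\le 1$. Beyond these, everything reduces to the one-step estimate and the monotonicity of $H$, so I expect no real obstacle.
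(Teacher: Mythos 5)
Your proof is correct and follows essentially the same route as the paper's: induction on the recursive binary-tree decomposition of $\text{nmin}_N$, using the all-ones lemma for the subtree not containing $[u_N,d]$, \cref{lem:app:min_abstract_minimal_elem} as the one-step estimate, and monotonicity of $H$ (for $d\le 1$) to absorb the exponent slack. The only difference is that you track the stated ordering explicitly (short interval last, hence in the $\lfloor N/2\rfloor$-argument right sub-call, where $\lfloor N/2\rfloor\le N/2$ makes the exponent comparison immediate), whereas the paper places $[u_N,d]$ first and dismisses the remaining orderings with ``similarly for other orderings''; your bookkeeping is, if anything, slightly cleaner.
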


\begin{proof}
    By induction:
    Let $N = 2$: 
    \begin{equation*}
        \text{nmin}^\sharp_2([u_1, 1], [u_2, d]) \overset{\cref{lem:app:min_abstract_minimal_elem}}{=} [a + \tfrac{u_1 - 1}{2}, H(d)]
    \end{equation*}
    Let $N = 3$:
    \begin{align*}
        \text{nmin}^\sharp_3([u_1, 1], [u_2, 1], [u_3, d]) 
        &= \text{nmin}^\sharp(\text{nmin}^\sharp([u_1, 1], [u_2, 1]), [u_3, d])
        \\
        &= \text{nmin}^\sharp([u_1 + u_2 - 1, 1], [u_3, d])
        \\
        &\subseteq [u_3 + \tfrac{u_1 + u_2 - 2}{2}, H(d)]
        \\
        \text{nmin}^\sharp_3([u_1, 1], [a, b], [u_2, 1]) 
        &= \text{nmin}^\sharp_3([u_3, d], [u_1, 1], [u_2, 1]) 
        \\
        &= \text{nmin}^\sharp(\text{nmin}^\sharp([u_3, d], [u_1, 1]), [u_2, 1])
        \\
        &= \text{nmin}^\sharp([u_3 + \tfrac{u_1 - 1}{2}, H(d)], [u_2, 1])
        \\
        &\subseteq [u_3 + \tfrac{u_1 + u_2 - 2}{2}, H^2(d)]
    \end{align*}
    So $\text{nmin}^\sharp_3([u_3, d], [u_1, 1], [u_2, 1])$ is always included in $[u_3 + \tfrac{u_1 + u_2 - 2}{2}, H^2(d)]$. 

    Induction hypothesis: The statement holds for all $2 \leq N' \leq N-1$. 

    Induction step: Then the property holds also for $N$:

    \begin{align*}
        \text{nmin}^\sharp_{N}([u_N, d], [u_1, 1], \dots, [u_{N-1}, 1]) 
        &= \text{nmin}^\sharp(
            \text{nmin}^\sharp_{\lceil N/2 \rceil}([u_N, d], [u_1, 1], \dots, [u_{\lceil N/2 \rceil -1}, 1]), 
            \\
            & \quad \text{nmin}^\sharp_{N - \lceil N/2 \rceil}([u_{\lceil N/2 \rceil}, 1], \dots, [u_{N-1}, 1]))
        \\
        &= \text{nmin}^\sharp(
            [u', H^{\lceil \log_2 \lceil N/2 \rceil \rceil + 1}(d)], 
            [u'', 1]
            )
        \\
        & \subseteq \text{nmin}^\sharp(
            [u', H^{\lceil \log_2 N/2 \rceil + 1}(d)], 
            [u'', 1]
            )
        \\
        &= \text{nmin}^\sharp(
            [u', H^{\lceil \log_2 N - \log_2(2) \rceil + 1}(d)], 
            [u'', 1]
            )
        \\
        &= \text{nmin}^\sharp(
            [u', H^{\lceil \log_2 N - 1 \rceil + 1}(d)], 
            [u'', 1]
            )
        \\
        &= \text{nmin}^\sharp(
            [u', H^{\lceil \log_2 N \rceil}(d)], 
            [u'', 1]
            )
        \\
        &= [u''', H^{\lceil \log_2 N \rceil + 1}(d)]
    \end{align*}
    and similarly for other orderings of the arguments. 
\end{proof}

\begin{lemma} \label{lem:app:H_identity}
    Let $H(x) := \tfrac{1 + x}{2}$. For all $N \in \mathbb{N}_{>0}$, we have that $d \leq 1 - 2^N$ implies $H^{N}(d) \leq 0$. 
\end{lemma}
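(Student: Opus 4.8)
The plan is to prove this by a short induction on $N$, exploiting that $H$ is affine with positive slope and hence monotone, and that $H$ carries the threshold $1-2^{N+1}$ exactly onto the threshold $1-2^{N}$ one level down.

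For the base case $N=1$, I would simply note that if $d \leq 1 - 2^{1} = -1$ then, since $H$ is (weakly) increasing, $H(d) \leq H(-1) = \tfrac{1+(-1)}{2} = 0$. For the inductive step I assume the statement for some $N \geq 1$ and take $d \leq 1 - 2^{N+1}$. Monotonicity of $H$ gives
\[
    H(d) \leq H(1 - 2^{N+1}) = \frac{1 + 1 - 2^{N+1}}{2} = 1 - 2^{N},
\]
so applying the induction hypothesis with $H(d)$ in place of $d$ yields $H^{N}(H(d)) \leq 0$, i.e.\ $H^{N+1}(d) \leq 0$, closing the induction.

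Alternatively, and perhaps even cleaner, I would first record the closed form $H^{N}(d) = 1 + \tfrac{d-1}{2^{N}}$ — provable by the same one-line induction, or by recognizing the geometric telescoping $H^{N}(d) = \sum_{j=1}^{N} 2^{-j} + 2^{-N} d = (1 - 2^{-N}) + 2^{-N} d$ — and then observe that the hypothesis $d \leq 1 - 2^{N}$ is precisely the inequality $\tfrac{d-1}{2^{N}} \leq -1$, which immediately forces $H^{N}(d) \leq 1 - 1 = 0$.

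There is essentially no obstacle here: the only point requiring (trivial) care is that the inequality must be pushed through the composition, which is legitimate because $H$ is increasing; this is the lemma that will later be used to show $\mathrm{nmin}^\sharp_{N}$ collapses to $0$ once the free endpoint $d$ is pushed below $1 - 2^{N}$.
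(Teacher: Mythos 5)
Your induction is exactly the paper's argument: the same base case $H(-1)=0$ and the same inductive step using monotonicity of $H$ together with the identity $H(1-2^{N+1}) = 1-2^{N}$. The closed form $H^{N}(d) = 1 + \tfrac{d-1}{2^{N}}$ you mention is a correct and slightly cleaner alternative, but your main proof coincides with the paper's.
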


\begin{proof} 
    By induction. 
    $N = 1$: Then $H(1 - 2) = \tfrac{1 + 1 - 2}{2} = 0$ 

    Induction hypothesis. The statement holds for all $N'$ such that $0 < N' \leq N$. 

    Induction step: $N+1$: $d \leq 1 - 2^N$:
    \begin{align*}
        H^{N+1}(d) \leq H^{N+1}(1 - 2^{N+1}) = H^N(H(1-2^{N+1})) = H^N(\tfrac{1 + 1 - 2^{N+1}}{2}) = H^N(1 - 2^N) \leq 0
    \end{align*}
\end{proof}

\begin{lemma} \label{lem:app:local_bump_abstract}
    For all boxes $B \in \mathcal{B}(\mathbb{\mathbb{R}^m})$, we have
    \begin{equation*}
        \phi_c^\sharp(B) 
        = 
        \begin{cases}
            [1, 1] &\text{if } B \subseteq \text{conv}(c)
            \\
            [0, 0] &\text{if } B \subseteq \Gamma \setminus \text{conv}(\mathcal{N}(c))
        \end{cases}
    \end{equation*}
    Furthermore, $\phi_c^\sharp(B) \subseteq [0,1]$. 
\end{lemma}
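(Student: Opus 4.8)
The plan is to propagate the box $B=[a,b]$ layer by layer through $\phi_c=R\circ\text{nmin}_{2m}\circ\Psi$, where $\Psi$ applies, coordinate-wise, the $2m$ clipped affine maps $R_{[*,1]}(M\ell(x_k-\tfrac{i_k^l}{M})+1)$ and $R_{[*,1]}(M\ell(\tfrac{i_k^u}{M}-x_k)+1)$ for $k=1,\dots,m$. Write $t_k^l(x):=M\ell(x_k-\tfrac{i_k^l}{M})+1$ and $t_k^u(x):=M\ell(\tfrac{i_k^u}{M}-x_k)+1$. Each of these affine forms depends on a single coordinate and is monotone in it, so the affine step loses no precision: interval propagation of $B$ yields $[t_k^l(a_k),t_k^l(b_k)]$ and $[t_k^u(b_k),t_k^u(a_k)]$, and by \cref{lem:app:alpha_beta_clipping} the clip $R_{[*,1]}^\sharp$ acts on the two endpoints. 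Hence $\Psi^\sharp(B)$ is a tuple of $2m$ intervals $[u_j,v_j]$ with $v_j\le 1$ for all $j$.

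First I would establish the containment $\phi_c^\sharp(B)\subseteq[0,1]$. Inspecting the three cases of \cref{lem:app:min_abstract} shows that the upper endpoint of $\text{nmin}^\sharp([a,b],[c,d])$ is at most $\max(b,d)$; by induction along the recursion in \cref{def:app:nmin} this propagates, so $\text{nmin}_{2m}^\sharp$ applied to intervals whose upper endpoints are all $\le1$ returns an interval with upper endpoint $\le1$. Applying $R^\sharp$ then clamps the lower endpoint to $\ge0$ without affecting the upper one, giving $\phi_c^\sharp(B)\subseteq[0,1]$ (and it is a genuine box by monotonicity and soundness of the interval operations).

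For the case $B\subseteq\text{conv}(c)$, I would note that $\tfrac{i_k^l}{M}\le a_k\le b_k\le\tfrac{i_k^u}{M}$ for every $k$, so $t_k^l\ge1$ and $t_k^u\ge1$ everywhere on $B$, and the clip collapses each of the $2m$ intervals to $[1,1]$. Since $\text{nmin}_{2m}$ is a ReLU network its interval transformer is exact on the point $(1,\dots,1)$, so $\text{nmin}_{2m}^\sharp([1,1],\dots,[1,1])=[\text{nmin}_{2m}(1,\dots,1),\text{nmin}_{2m}(1,\dots,1)]=[1,1]$, and $R^\sharp([1,1])=[1,1]$. For the case $B\subseteq\Gamma\setminus\text{conv}(\mathcal{N}(c))$, I would first recall (as in the proof of \cref{lem:app:local_bump}) that $\text{conv}(\mathcal{N}(c))$ is the box $\prod_{k=1}^m[\tfrac{i_k^l-1}{M},\tfrac{i_k^u+1}{M}]$; since two disjoint axis-aligned boxes are separated in some coordinate, there is a $k$ with $b_k<\tfrac{i_k^l-1}{M}$ or $a_k>\tfrac{i_k^u+1}{M}$, and by the $t_k^l\leftrightarrow t_k^u$ symmetry assume the former. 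Then $t_k^l(x)<M\ell(-\tfrac1M)+1=1-\ell=1-2^{\lceil\log_2 2m\rceil+1}$ for all $x\in B$, so the corresponding clipped interval $[u_j,v_j]$ has $v_j\le1-2^{\lceil\log_2 2m\rceil+1}$. Enlarging the upper endpoints of the other $2m-1$ inputs to $1$ (legitimate by monotonicity of $\text{nmin}_{2m}^\sharp$) and applying \cref{lem:app:min_estimate} gives $\text{nmin}_{2m}^\sharp(\Psi^\sharp(B))\subseteq[u',H^{\lceil\log_2 2m\rceil+1}(v_j)]$ with $H(x)=\tfrac{1+x}{2}$; then \cref{lem:app:H_identity} with $N=\lceil\log_2 2m\rceil+1$ and $v_j\le1-2^N$ yields $H^N(v_j)\le0$, so the upper endpoint after $\text{nmin}_{2m}^\sharp$ is $\le0$ and $R^\sharp$ sends the whole interval to $[0,0]$.

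The hard part will be this last case: one must track the constant $\ell=2^{\lceil\log_2 2m\rceil+1}$ so that it lines up exactly with the depth exponent appearing in \cref{lem:app:min_estimate}, handle the fact that the $2m$ inputs to $\text{nmin}_{2m}$ can occur in any position within the balanced recursion tree (which is why \cref{lem:app:min_estimate} is phrased for all orderings of the arguments), and carefully justify replacing the remaining $2m-1$ upper endpoints by $1$ via monotonicity before the estimate can be applied.
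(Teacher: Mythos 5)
Your proposal is correct and follows essentially the same route as the paper's proof: propagate $B$ through the monotone affine forms and the clip, handle $B\subseteq\text{conv}(c)$ via all inputs collapsing to $[1,1]$, and handle $B\subseteq\Gamma\setminus\text{conv}(\mathcal{N}(c))$ by isolating one clipped interval with upper endpoint at most $1-\ell$ and invoking \cref{lem:app:min_estimate} and \cref{lem:app:H_identity} before the final $R^\sharp$. You in fact supply details the paper leaves implicit (the separation argument producing the bad coordinate, the exact matching of $\ell$ with the depth exponent, and the justification of $\phi_c^\sharp(B)\subseteq[0,1]$, which the paper dismisses as ``by construction'').
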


\begin{proof}
    Let $\phi_c$ be a local bump and let $B = [a, b] \in
    \mathcal{B}(\mathbb{R}^m)$. Let $[r_k^1, s_k^1], [r_k^2, s_k^2] \in
    \mathcal{B}(\mathbb{R})$ such that $M \ell ([a_k, b_k] - \tfrac{i_k^l}{M}) +
    1 = [r_k^1, s_k^1]$ and $M \ell (\tfrac{i_k^u}{M} - [a_k, b_k]) + 1 = [r_k^2,
    s_k^2]$.
    \begin{itemize}
        \item If $[a, b] \subseteq \text{conv}(c)$: 
        Then $1 \leq r_k^1$ and $1 \leq r_k^2$ for all $k \in \{1,\dots,m\}$. Thus 
        \begin{align*}
            \phi^\sharp_c([a, b]) 
            &= R^\sharp(\text{nmin}^\sharp_{2m} \{R^\sharp_{[*,1]}([r_k^p, s_k^p])\}_{(p, k) \in \{1, 2\} \times \{1, \dots, m\}} )
            \\
            &= R^\sharp(\text{nmin}^\sharp_{2m} \{[1, 1] \}_{(p, k) \in \{1, 2\} \times \{1, \dots, m\}})
            \\
            &= [1, 1]
        \end{align*}
        \item If $[a, b] \subseteq \Gamma \setminus
        \text{conv}(\mathcal{N}(c))$: Then there exists a $(p', k') \in \{1, 2\}
        \times \{1, \dots, m\}$ such that $s_{k'}^{p'} \leq 1 - 2^{\lceil \log_2
        N \rceil + 1}$. Using \cref{lem:app:H_identity} and
        \cref{lem:app:min_estimate}, we now that there exists a $u \in
        \mathbb{R}$ s.t. 
        \begin{align*}
            \phi^\sharp_c([a, b]) 
            &= R^\sharp(\text{nmin}^\sharp_{2m} \{R_{[*,1]}^\sharp([r_k^p, s_k^p]) \}_{(p, k) \in \{1, 2\} \times \{1, \dots, m\}})
            \\
            &= R^\sharp(\text{nmin}^\sharp_{2m} \{[R_{[*,1]}(r_k^p), R_{[*,1]}(s_k^p)] \}_{(p, k) \in \{1, 2\} \times \{1, \dots, m\}})
            \\
            &\subseteq R^\sharp(\text{nmin}^\sharp_{2m} \{[R_{[*,1]}(r_k^p), 1] \}_{(p, k) \neq (p', k')} \cup \{ [r^{p'}_{k'}, s^{p'}_{k'}] \})
            \\
            &\subseteq R^\sharp([u, 0])
            \\
            &= [0, 0]
        \end{align*}
    \end{itemize}
    For any $[a, b] \in \mathcal{B}(\Gamma)$ we have $\phi_c^\sharp([a, b])
    \subseteq [0,1]$ by construction. 
\end{proof}

\begin{lemma} \label{lem:app:Rn_slice}
    Let $\Gamma \subset \mathbb{R}^m$ be a closed box and let $f \colon \Gamma
    \to \mathbb{R}$ be continuous.
    For all $\delta > 0$ exists a set of ReLU networks $\{n_k\}_{0 \leq k \leq
    N-1}$ of size $N \in \mathbb{N}$ approximating the $N$-slicing of $f$,
    $\{f_k\}_{0 \leq k \leq N-1}$ ($\xi_k$ as in \cref{def:app:const_N_slicing})
    such that for all boxes $B \in \mathcal{B}(\Gamma)$
    \begin{equation*}
        n_k^\sharp(B) =
        \begin{cases}
            [0, 0] 
            &\text{if } f(B) \leq \xi_k - \tfrac{\delta}{2}
            \\
            [1, 1] 
            &\text{if } f(B) \geq \xi_{k+1} + \tfrac{\delta}{2}. 
        \end{cases}
    \end{equation*}
    and $n_k^\sharp(B) \subseteq [0, 1]$. 
\end{lemma}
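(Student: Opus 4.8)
\emph{Proof plan.} I would follow the sketch given after \cref{lem:Rn_slice}, making each step precise. Since $\Gamma$ is compact and $f$ continuous, Heine--Cantor gives uniform continuity: fix $M \in \mathbb{N}$ large enough that $\|x-y\|_\infty \leq \tfrac1M$ implies $|f(x)-f(y)| \leq \tfrac\delta2$ for all $x,y \in \Gamma$, and take the grid $G := (\tfrac{\mathbb Z}{M})^m$. Choose $N$ so that each slice has height exactly $\tfrac\delta2$, i.e.\ $\xi_{k+1}-\xi_k = \tfrac\delta2$, shrinking $\delta$ slightly if necessary as in the proof of \cref{thm:universal_approx_thm_prelim}; note this gives $\xi_{k+1}-\tfrac\delta2 = \xi_k$.

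For each $k$ I build a finite set $\Delta_k$ of grid hyperrectangles: for every box $B \in \mathcal{B}(\Gamma)$ with $f(B) \geq \xi_{k+1}+\tfrac\delta2$ (i.e.\ $\min f(B) \geq \xi_{k+1}+\tfrac\delta2$), let $c(B) \subseteq G$ be the minimal grid hyperrectangle with $B \subseteq \text{conv}(c(B))$, obtained by rounding the coordinate bounds of $B$ outward to $G$, and set $\Delta_k := \{c(B) : f(B) \geq \xi_{k+1}+\tfrac\delta2\}$. As $\Gamma$ is bounded, only finitely many grid hyperrectangles meet it, so $\Delta_k$ is finite. The geometric fact I would prove from uniform continuity is that $f(\text{conv}(c)) \geq \xi_{k+1}$ for every $c \in \Delta_k$: each $x \in \text{conv}(c(B))$ is within $\ell_\infty$-distance $\tfrac1M$ of its coordinatewise projection onto $B$, which lies in $B$ and hence has value $\geq \xi_{k+1}+\tfrac\delta2$, so $f(x) \geq \xi_{k+1}$. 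I then set
\[
    n_k(x) := R_{[*,1]}\!\Big(\sum_{c \in \Delta_k} \phi_c(x)\Big),
\]
with $\phi_c$ as in \cref{def:app:local_bump}.

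It remains to check the three claims using \cref{lem:app:local_bump_abstract} and the clipping formula of \cref{lem:app:alpha_beta_clipping}. If $f(B) \geq \xi_{k+1}+\tfrac\delta2$ then $c(B) \in \Delta_k$ with $B \subseteq \text{conv}(c(B))$, so $\phi_{c(B)}^\sharp(B) = [1,1]$ while the remaining $\phi_c^\sharp(B) \subseteq [0,1]$ are nonnegative; hence $\sum_c \phi_c^\sharp(B) = [1,1] +^\sharp [p_1,p_2]$ with $0 \leq p_1 \leq p_2$, and $R_{[*,1]}^\sharp([1+p_1,1+p_2]) = [1,1]$. If $f(B) \leq \xi_k - \tfrac\delta2$, I claim $B$ is disjoint from $\text{conv}(\mathcal{N}(c))$ for every $c \in \Delta_k$: a point $x \in B \cap \text{conv}(\mathcal{N}(c))$ would lie within $\ell_\infty$-distance $\tfrac1M$ of $\text{conv}(c)$, since $\text{conv}(\mathcal{N}(c))$ enlarges $\text{conv}(c)$ by one grid step per coordinate, so by the geometric fact and uniform continuity $f(x) \geq \xi_{k+1}-\tfrac\delta2 = \xi_k$, contradicting $f(x) \leq \xi_k - \tfrac\delta2 < \xi_k$. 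Thus $B \subseteq \Gamma \setminus \text{conv}(\mathcal{N}(c))$ for all $c$, giving $\phi_c^\sharp(B) = [0,0]$ and $n_k^\sharp(B) = R_{[*,1]}^\sharp([0,0]) = [0,0]$. Finally, for arbitrary $B$ each $\phi_c^\sharp(B) \subseteq [0,1]$ yields $\sum_c \phi_c^\sharp(B) \subseteq [0,|\Delta_k|]$, which the clipping caps at $[0,1]$; soundness of the composed interval transformer follows from monotonicity of the interval operations.

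The step I expect to be the main obstacle is showing that one and the same $\Delta_k$ can simultaneously be rich enough that every $B$ with $f(B) \geq \xi_{k+1}+\tfrac\delta2$ lies inside some $\text{conv}(c)$, yet tight enough that every $B$ with $f(B) \leq \xi_k - \tfrac\delta2$ avoids all the enlarged hulls $\text{conv}(\mathcal{N}(c))$. Reconciling these relies on carefully matching the uniform-continuity modulus to the grid spacing $\tfrac1M$, on the identity $\xi_{k+1}-\xi_k=\tfrac\delta2$, and on the one-cell decay margin built into $\phi_c$ via the factor $\ell$ (so that $\phi_c$ already vanishes on $\Gamma \setminus \text{conv}(\mathcal{N}(c))$, cf.\ \cref{lem:app:local_bump_abstract}); the delicate bookkeeping is making the $\tfrac\delta2$ budget absorb both the outward rounding to $G$ and the subsequent enlargement to neighbouring grid points.
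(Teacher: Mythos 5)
Your proof is correct and follows essentially the same construction as the paper's: the same uniform-continuity choice of $M$ and grid $G$, the same sets $\Delta_k$ of outward-rounded enclosing grid hyperrectangles, the same network $n_k = R_{[*,1]}(\sum_{c\in\Delta_k}\phi_c)$, and the same two case analyses via \cref{lem:app:local_bump_abstract}. In fact you make explicit the one step the paper's appendix dispatches with ``because $G$ is fine enough'' (that a point of $\text{conv}(\mathcal{N}(c))$ lies within $\tfrac1M$ of $\text{conv}(c)$ and hence has $f$-value at least $\xi_{k+1}-\tfrac\delta2\geq\xi_k$, contradicting $f(B)\leq\xi_k-\tfrac\delta2$), so no changes are needed.
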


\begin{proof}
    Let $N \in \mathbb{N}$ such that $N \geq 2 \tfrac{\xi_{\max} -
    \xi_{\min}}{\delta}$ where $\xi_{\min} := \min f(\Gamma)$ and $\xi_{\max} :=
    \max f(\Gamma)$.
    For simplicity we assume $\Gamma = [0,1]^m$. Using the Heine-Cantor theorem,
    we get that $f$ is uniformly continuous, thus there exists a $\delta' > 0$
    such that $\forall x, y \in \Gamma . ||y-x||_\infty < \delta' \Rightarrow
    ||f(y) - f(x)|| < \tfrac{\delta}{2}$. 
    Further, let $M \in \mathbb{N}$ such that $M \geq \tfrac{1}{\delta'}$ and
    let $G$ be the grid defined by $G := (\tfrac{\mathbb{Z}}{M})^m \subseteq
    \mathbb{R}^m$. 

    Let $C(B)$ be the set of corner points of the closest hyperrectangle in $G$
    confining $B \in \mathcal{B}(\Gamma)$. We construct the set
    \begin{equation*}
        \Delta_{k} := \{C(B) \mid 
            B \in \mathcal{B}(\Gamma) : 
            f(B) \geq \xi_{k+1} + \tfrac{\delta}{2}\}. 
    \end{equation*}

    We claim that $\{n_k\}_{0 \leq k \leq N-1}$ defined by 
    \begin{equation*}
        n_k(x) := R_{[*,1]} \left(
            \sum_{c \in \Delta_{k}} \phi_c(x) \right)
    \end{equation*}
    satisfies the condition. 
    
    Case 1: Let $B \in \mathcal{B}(\Gamma)$ such that $f(B) \geq
    \xi_{k+1} + \tfrac{\delta}{2}$. Then for all $g \in \mathcal{N}(B)$ holds
    $f_k(g) = \delta_2$. By construction exists a $c' \in \Delta_{k}$ such
    that $B \subseteq \text{conv}(c')$. Using \cref{lem:local_bump_abstract} we get
    \begin{align*}
        n_k^\sharp(B)
        &= R_{[*,1]}^\sharp \left(
            \sum_{c \in \Delta_{k}} \phi_c^\sharp(B) \right)
        = R_{[*,1]}^\sharp \left(
            \phi_{c'}^\sharp(B)
            +
            \sum_{c \in \Delta_{k} \setminus c'} \phi_c^\sharp(B) \right)
        \\
        &= R_{[*,1]}^\sharp \left(
            [1, 1]
            +
            [p_1, p_2] \right)
        = [1,1],
    \end{align*}
    where $[p_1, p_2] \in \mathcal{B}(\mathbb{R}_{\geq 0})$. Indeed, by case
    distinction:

    Case 2: Let $B \in \mathcal{B}(\Gamma)$ such that $f(B) \leq
    \xi_k - \tfrac{\delta}{2}$. Then for all $g \in \mathcal{N}(B)$ holds
    $f_k(g) = 0$. Further, $B \cap \text{conv}(\mathcal{N}(c)) = \emptyset$ for all $c
    \in \Delta_{k}$ because $G$ is fine enough. 
    Using \cref{lem:local_bump_abstract} we obtain
    \begin{equation*}
        n_k^\sharp(B) 
        = R_{[*,1]}^\sharp \left(
            \sum_{c \in \Delta_{k}} \phi_c^\sharp(B) \right) 
        = R_{[*,1]}^\sharp ([0,0]) = [0,0].
    \end{equation*}

    By construction we have $n_k^\sharp(B) \subseteq [0, 1]$. 
\end{proof}

\message{^^JLASTPAGE \thepage^^J}

\end{document}